\renewcommand{\P}{\mathcal{P}}
\newcommand{\Z}{\mathcal{Z}}
\renewcommand{\Pr}{\mathbb{P}}
\newcommand{\K}{\mathcal{K}}
\newcommand{\G}{\mathcal{G}}
\newcommand{\ignore}[1]{}
\newcommand{\mysecx}{\vspace{.02cm}}
\newcommand{\mysec}[1]{\mysecx {\bf #1: }}
\mathchardef\mhyphen="2D 
\DeclareMathOperator*{\argmin}{\arg\!\min}
\DeclarePairedDelimiter{\abs}{\lvert}{\rvert}
\let\originalleft\left
\let\originalright\right
\renewcommand{\left}{\mathopen{}\mathclose\bgroup\originalleft}
\renewcommand{\right}{\aftergroup\egroup\originalright}
\newtheorem{definition}{Definition}
\renewcommand{\baselinestretch}{1.5}
\newtheorem{theorem}{Theorem}[section]
\begin{document}
	\title{Fair 
	Decision Rules for Binary Classification}
	\author{Oktay G\"unl\"uk \quad Connor Lawless \\ School of Operations Research and Information Engineering \\ Cornell University}
	\date{\today}

	\maketitle
	
	\begin{abstract}
In recent years, machine learning has begun automating decision making in fields as varied as college admissions, credit lending, and criminal sentencing. The socially sensitive nature of some of these applications together with increasing regulatory constraints has necessitated the need for algorithms that are both fair and interpretable. In this paper we consider the problem of building Boolean rule sets in disjunctive normal form (DNF), an interpretable model for binary classification, subject to fairness constraints. We formulate the problem as an integer program that maximizes classification accuracy with explicit constraints on two different measures of classification parity: equality of opportunity and equalized odds. Column generation framework, with a novel formulation, is used to efficiently search over exponentially many possible rules. When combined with faster heuristics, our method can deal with large data-sets. Compared to other fair and interpretable classifiers, our method is able to find rule sets that meet stricter notions of fairness with a modest trade-off in accuracy.
	\end{abstract}


\section{Introduction}
With the explosion of artificial intelligence in recent years, machine learning (ML) has begun taking over key decision making tasks in a variety of areas ranging from finance to driving. While in some applications the objective of the ML model is purely predictive accuracy, in others, such as lending or hiring, additional considerations come into play. In particular, when the decision making task at hand has a societal impact, a natural question is whether or not the ML model is being fair to all those affected. Recent results have shown ML algorithms to be racially biased in a number of applications including facial identification in picture tagging and predicting criminal recidivism \cite{mehrabi2019survey}. Further complicating the problem is the need for model interpretability in certain classification applications where ML models complement human decision making, such as criminal justice and medicine. In these applications transparency is necessary for domain experts to understand, critique, and consequently trust the ML models. With these dueling objectives in mind, practitioners need to design classification algorithms that are accurate, fair AND interpretable. 
    
Many common classification models can easily be expressed as mathematical optimization problems. Discrete optimization is a natural tool for these problems as many interpretable machine learning models can be represented by low-complexity discrete objects. So far, integer programming (IP), a promising technique within discrete optimization that has been successful in many industrial applications such as production planning, scheduling, and logistics, has been sparsely used in machine learning applications. An important reason behind this is the presence of big datasets in ML applications which lead to large-scale IPs that are considered to be computationally intractable. However with the help of algorithmic advances combined with improvements in computational hardware IP methods are becoming an increasingly promising approach for certain ML problems such as optimal decision trees \cite{aghaei2019learning, bertsimas2017, molero2020, gunluk2019optimal}, risk scores \cite{ustun2019learning}, and rule sets \cite{dash2018boolean} amongst others.

In this paper, we focus on a well-studied interpretable class of ML models for binary classification, namely rule sets in disjunctive normal form (DNF, 'OR-of-ANDs'). For example, a DNF rule set with two rules for predicting criminal recidivism could be 
\begin{center}
\big[\text{(Priors$\ge$3) and (Age$\le$45) and (Score Factor $=$ TRUE)}\big] \\ \text{~OR~}
\\\big[\text{(Priors$\ge$20) and (Age$\ge$45)}\big] 
\end{center}

\noindent where Priors, Age, and Score Factor are features related to the defendant. This rule set has two clauses, each of which check certain conditions on the features of the data.
The fewer the clauses or conditions in each clause, the more interpretable the rule set. In contrast to other interpretable classes of rule sets such as decision trees \cite{aghaei2019learning, bertsimas2017, breiman1984, kamiranDADT, quinlan1993},  and decision lists \cite{angelino2017, lakkaraju2017, letham2015, rivest1987,  wang2015, yang2017} the rules within a DNF rule set are unordered and have been shown in a user study to require less effort to understand \cite{lakkarajuDecisionSets2016}. In recent years, discrete optimization approaches to learning DNF rule sets have seen impressive results in empirical interpretable machine learning, including winning the FICO interpretable machine learning challenge \cite{FICO2018, dash2018boolean}. However there has been little work to extend these frameworks to include fairness constraints - even though many applications requiring interpretability are socially sensitive in nature, including the credit approval application in the FICO challenge.

\subsection{Our contribution}
In this paper we present an integer programming formulation to build Boolean DNF rule sets subject to explicit constraints on two notions of fairness: equality of opportunity and equalized odds \cite{hardt2016equality}. Our framework also allows for explicit bounds on the complexity (i.e. number of clauses and conditions) of the rule set, and by extension it's interpretability. Unlike other popular fair classification algorithms, we directly incorporate the original fairness constraints into our formulation instead of using relaxed versions of the criteria. The addition of both the fairness and complexity constraints allows our approach to generate rule sets that trade off accuracy, interpretability, and fairness. 

Instead of directly optimizing for rule set accuracy our formulation optimizes for Hamming loss, a proxy for 0-1 classification error. This new objective function leads to a significantly smaller IP formulation which can be solved much faster while approximating the 0-1 objective closely. We use a column generation (CG) framework to search over the candidate rules, only considering rules that can improve predictive accuracy subject to the fairness constraints. To generate candidate clauses we use an integer programming formulation that solves the pricing problem. 

To help scale our method to larger data sets, we mine rules from heuristic tree-based methods as a warm-start operation on our column generation procedure. In addition we incorporate other heuristics, such as sub-sampling data points and features in the pricing problem, to further speed up solve time. These computational ideas enable our framework to generate high-performing rule sets in a fraction of the time of other IP-based approaches such as optimal decision trees. 

We also present extensive computational studies on our method, and compare it to other state of the art fair and interpretable machine learning algorithms. In addition, we provide empirical analysis of Hamming loss as a proxy for 0-1 loss, and compare our column generation approach to other heuristic rule mining strategies. 

\subsection{Related work}

Our work builds upon two areas of related work: discrete optimization for interpretable ML, and fair ML.

\textbf{Discrete optimization for machine learning: } The last few years have seen a renewed interest in using discrete optimization to solve machine learning problems. Ustun and Rudin have explored the use of IP methods to create sparse linear models for classification \cite{ustun2014supersparse} and risk scores \cite{ustun2019learning}. A number of researchers have also approached the problem of constructing decision trees \cite{aghaei2019learning, bertsimas2017, molero2020, gunluk2019optimal}. Most of these approaches aim to establish a certificate of optimality at the expense of computational effort, requiring hours of computation time.

Recent work in constructing rule sets has relied on rule miners to generate lists of candidate clauses \cite{lakkaraju2016, wangf2015, wang2017}. In contrast, our approach jointly generates candidate clauses and selects the optimal rule set. Closely related to our approach, Dash et al. \cite{dash2018boolean} also formulate constructing DNF rule sets as an IP and generate candidate clauses using column generation. Compared to Dash et al., our formulation includes  the fairness constraints together with some additional constraints that are not necessary without fairness constraints. In addition, we use a more compact formulation for the pricing problem and reduce the number of constraints by a factor of the dimensionality of the feature space. We also present a different approach to handle the upper bounds in the linear relaxation that prevents cycling which might happen with the formulation used by  Dash et al.. Computationally, we also differ in the use of heuristic rule mining from tree-based algorithms as a warm start to our column generation process.

\textbf{Fair machine learning: } Quantifying fairness is not a straight forward task and a number of metrics have been proposed in the fair machine learning literature. These metrics broadly fall into three groups: disparate treatment, classification parity, and calibration \cite{corbettdavies2018measure}. The first measure, disparate treatment, simply excludes sensitive features (i.e. race, gender), or proxies of these sensitive attributes, from the data - however removing sensitive attributes can lead to sub-optimal predictive performance \cite{corbettdavies2018measure}. Classification parity, or group fairness, ensures that some measure of prediction error (ex. Type I/II error, accuracy) is equal across all groups. Recent results have built fair classifiers around various related metrics including demographic parity \cite{agarwal2018reductions, calders2010, dwork2011fairness, edwards2015censoring, kamiran2012}, equalized odds \cite{agarwal2018reductions, hardt2016equality, Zafar_2017}, and equality of opportunity \cite{hardt2016equality, Zafar_2017}. We note that these metrics are only meaningful if the algorithm is well calibrated (i.e. once a prediction has been made, it should mean the same thing for each individual regardless of group). More importantly, recent impossibility results  \cite{Chouldechova_2017, kleinberg2016inherent} have shown that simultaneously attaining  perfect calibration and certain measures of classification parity is not possible. Thus, the chief goal of a fair classifier is to maximize predictive accuracy subject to some requirement on fairness. 
    

Our work focuses on  explicitly integrating fairness considerations directly into the training of a classification model. Previous work in this area has focused on adding either some form of fairness regularization to the loss function \cite{berk2017convex, pmlr-v28-zemel13}, or as a constraint in the underlying optimization problem \cite{aghaei2019learning, zafar2015fairness, Zafar_2017}. However many current approaches require the use of a relaxed version of the fairness constraints (i.e. convex, linear) during optimization \cite{donini2018empirical, wu2018fairnessaware, Zafar_2017}, which have been shown to have sub-par fairness on out-of-sample data \cite{pmlr-v119-lohaus20a}. Similar to our approach, Aghaei et al. \cite{aghaei2019learning} formulate optimal decision trees subject to explicit constraints on fairness. However, unlike our approach which uses heuristics to speed up the solve time, their approach aims to solve the MIP formulation to optimality, requiring multiple hours to construct a tree as opposed to the five minute time limit we place on our approach.

\subsection{Outline of the paper}
Section \ref{sec:fairMet} gives an overview of the two notions of fairness metrics we consider. Section \ref{sec:class} introduces our MIP formulation for constructing boolean rule sets and the column generation procedure to generate candidate clauses. We discuss computational tricks to improve the speed and scalability of our framework in section \ref{sec:comp}, and present empirical results in section \ref{sec:exp}. Comprehensive experimental results are left for the appendix.

\section{Fairness metrics}\label{sec:fairMet}

As a starting point we consider the standard supervised binary classification setting where given {a training set of $n$} data points $(\textbf{X}_i, y_i)$ with labels $y_i \in \{-1,1\}$ and features $X_i \in \{0,1\}^{p}$  for $i\in I=\{1,\ldots,n\}$, the goal  is to design a classifier $d:  \{0,1\}^p \rightarrow \{-1,1\}$  that minimizes the expected error $\Pr(d(X)\neq Y)$
between the predicted label and the true label for unseen data.
Note that assuming the data to be binary-valued is not a restrictive assumption in practice and 
we discuss how to deal with numerical and categorical features in Section \ref{sec:comp}. 

Now consider the case when each data point also has an associated group (or protected feature) $g_i\in\mathcal{G}$ where $\mathcal{G}$ is a given discrete set and the classifier is not only required to predict the labels well, but it is also required to treat each group fairly. In particular, we will focus on two measures of group fairness related to classification parity: equality of opportunity, and equalized odds.

\subsection{Equality of opportunity} 
Equality of opportunity requires the Type II error rate (i.e. false negative rate) to be equal across groups by enforcing the following condition \cite{hardt2016equality}: 	
	\begin{equation} 
    \Pr(d(X) = -1 | Y = 1, G = g) = 
    \Pr(d(X) = -1 | Y = 1)    \label{eq:EOO}
    \end{equation}
for all $ g \in \mathcal{G} $. In other words, the false negative rate of the classifier is required to be independent of the group the data point belongs to. This criterion ensures fairness when there is a much larger societal cost to false negatives than false positives, making it particularly well-suited for applications such as loan approval or hiring decisions. For example, in the context of hiring, it ensures qualified candidates would be offered a job with equal probability, independent of their group membership (ex. male/female).
    
\subsection{Equalized odds} 
A stricter condition on the classifier is to require that both the Type I and Type II error rates are equal across groups \cite{hardt2016equality}.  
This requirement prevents possible trade-off between false negative and false positive errors across groups and can be seen as a generalization of the equality of opportunity criterion to include false positives. 
To achieve equalized odds, together with equation \eqref{eq:EOO}, the following condition is also enforced:
   	\begin{equation} 
    \Pr(d(X) = 1 | Y = -1, G = g) = \Pr(d(X) = 1 | Y = -1)   \label{eq:EO}
    \end{equation}
for all $ g\in \mathcal{G}$. This notion of fairness equalizes the error rate between the groups for both Type I and Type II errors, but may be overly restrictive in applications where there is little difference in the societal cost between the two types of errors.

\mysecx
In a practical setting, it is unrealistic to expect to find classifiers that can satisfy the above criteria exactly. In fact, in most non-trivial applications strong adherence to fairness criteria come at a large cost to accuracy \cite{kleinberg2016inherent} and therefore one needs to consider how much these conditions are violated as a measure of fairness. For example, in the context of equality of opportunity, the maximum violation can be used to measure the {\em unfairness} of the classifier by the following expression:
\smallskip
\begin{align*}   \Delta(d)=  \max_{g, g' \in \mathcal{G}} \Big|\Pr(d(X)& = -1 | Y = 1, G = g) - \Pr(d(X) = -1 | Y = 1, G = g') \Big|
   \end{align*}

When training the classifier $d$, one can then use $  \Delta(d)$ in the objective function as a penalty term or can explicitly require a constraint of the form $\Delta(d)\le\epsilon$ to be satisfied by the classifier $d$. We will focus on the latter case as it allows for explicit control over tolerable unfairness.

\section{Classification framework: boolean decision rule sets}\label{sec:class}

For the remainder of this paper, we focus on one specific type of classifier: decision rule sets in DNF (OR-of-ANDs). Given a dataset with a collection of features, a decision rule consists of a subset of the features each with an associated condition (i.e. Age $\geq 25$). A rule is satisfied by a data point if it meets all of the conditions included in the rule. A decision rule set is composed of a collection of rules and a data point satisfies the rule set if it meets any of its rules. We refer to a set of rules that could be included in a rule set as a pool of candidate rules.

We now introduce our method to construct DNF rule sets for binary classification subject to fairness constraints, henceforth referred to as Fair CG. Note that when the input data is binary-valued, a DNF-rule set simply corresponds to checking whether a subset of features satisfies a specific combination of 0s and 1s. Moreover if each data point includes the complement of every feature, the rule set simplifies even further and only needs to check if a subset of features are all 1 for a given data point.
Consequently, if there are $p$ binary features there can only be  a finite number ($2^p-1$) of possible decision rules. 
Therefore, it is possible to enumerate all possible rules and then formulate a large scale integer program (IP) to select a small subset of rules that minimizes error on the training data. 
In this framework, it is also possible to explicitly require the rule set to satisfy certain properties such as fairness or interpretability.
However, for most practical applications such an IP would be onerously large and computationally intractable. Instead, we solve the continuous relaxation (LP) of the IP using column generation. 
Consequently, instead of enumerating all possible rules, one can enumerate those that can potentially improve classification error. 	
  
 \subsection{0-1 loss}
 When constructing a rule set, our ultimate aim is to minimize 0-1 classification error. Let $\K$ denote the set of all possible DNF rules and $\K_i\subset\mathcal{K}$ be the set of rules met by data point $i\in I$. Assume that the data points are partitioned into two sets based on their labels: 
	\begin{equation*}\mathcal{P} = \{i\in I : y_i = 1\},\text{~~and~~}\mathcal{N} = \{i\in I : y_i = -1\}.\end{equation*}
We denote ${\cal P}$ and ${\cal N}$ the positive and negative classes respectively. Furthermore, let $K \subseteq \mathcal{K}$ represent a DNF rule set composed of candidate rules from $\mathcal{K}$. For data points from the positive class (i.e. $i \in {\cal P}$), the 0-1 loss is simply the indicator that the data point meets no rules in rule set (i.e. $|\mathcal{K}_i \cap K| = \emptyset$). For points in the negative class, the 0-1 loss is the indicator of whether the data point meets at least one rule in the rule set (i.e. $|\mathcal{K}_i \cap K| > 0$). Putting both terms together, we get that the 0-1 loss for a data point $(x_i, y_i)$ and rule set $K$ is as follows:

\begin{definition}[0-1 loss]
$$
\ell_{01}(x_i,y_i,K) = \mathbb{I}(i \in {\cal P}) \mathbb{I}(|\mathcal{K}_i \cap K| = 0) + \mathbb{I}(i \in {\cal N}) \mathbb{I}(|\mathcal{K}_i \cap K| > 0)
$$
\end{definition}

Let  $w_{k}\in\{0,1\}$  be a  variable indicating if rule $k\in\K$ is selected;  $\zeta_i\in\{0,1\}$  be a  variable indicating if data point $i\in \P \cup {\cal N}$ is misclassified. With this notation in mind, the problem of identifying the rule set that minimizes 0-1 loss becomes:

\begin{align}
\textbf{min} \quad & \sum_{i\in \cal{P}} \zeta_i +\sum_{i\in \cal{N}} \zeta_i ~~ \label{obj:01}\\
	\textbf{s.t.}  \quad
	& \zeta_i + \sum_{k\in {\cal K}_i} w_k \geq 1 , \quad i \in \cal{P}~~ \label{const:accP01} \\
	& w_k  \leq   \zeta_i , \quad  i \in {\cal N}, k \in {\cal K}_i ~~ \label{const:accZ01} \\
	& w_k,  ~\zeta_i  \in\{0,1\}, \quad {k\in \cal{K}}, i \in {\cal P} \cup {\cal N} ~~ \label{const:accBin01} 
\end{align}
Any feasible solution $(\bar w,\bar \zeta)$ to \eqref{const:accP01}-\eqref{const:accBin01} corresponds to a rule set $S=\{k\in\K\::\: \bar w_k=1\}$. 
Constraint \eqref{const:accP01} identifies false negatives by forcing $\zeta_i $ to take value 1 if no rule that is satisfied by the point $i \in \mathcal{P}$ is selected. Similarly, constraint \eqref{const:accZ01} identifies false positives by forcing $\zeta_i$ to take a value of 1 if any rule satisfied by $i \in \mathcal{N}$ is selected. The objective forces $\zeta_i$ to be 0 when possible, so there is no need to add constraints to track whether data point $i$ is classified correctly in this formulation. Note that there are a potentially exponential number of constraint (\ref{const:accZ01}) as there needs to be one constraint for each rule met by data point $i$. In the presence of a cardinality constraint on the $w_k$ (i.e. a limit on the total number of rules), the set of constraints (\ref{const:accZ01}) for each data point can be aggregated together to provide a form modern solvers will handle more efficiently \cite{IPref}.

 \subsection{Hamming loss}

While our aim is to optimize the 0-1 loss, the corresponding IP formulation is large and hard to solve in practice. Instead, we follow the approach of \cite{dash2018boolean} and optimize for {\em Hamming loss}, a proxy for classification error. For Hamming loss every data point of the positive class that is classified incorrectly incurs a cost of 1, and every data point in the negative class incurs a cost of 1 for every rule in the selected rule set that it meets.
\begin{definition}[Hamming Loss]
$$
\ell_{h}(x_i,y_i,K) = \mathbb{I}(i \in {\cal P}) \mathbb{I}(|\mathcal{K}_i \cap K| = 0) + \mathbb{I}(i \in {\cal N}) |\mathcal{K}_i \cap K|
$$
\end{definition}
Notice that Hamming loss is asymmetric with respect to errors for the positive and negative classes. Specifically, while the loss for a false negative remain the same as 0-1 loss, a false positive may incur a loss greater than 1 if it is satisfies multiple rules.
Using the same notation as the previous IP formulation, the problem of finding the rule set that minimizes Hamming loss is simply:

\begin{align}
\textbf{min} \quad & \sum_{i\in \cal{P}} \zeta_i +\sum_{i\in \cal{N}} \sum_{k \in {\cal K}_i} w_k ~~ \label{obj:ham}\\
	\textbf{s.t.}  \quad
	& \zeta_i + \sum_{k\in {\cal K}_i} w_k \geq 1 , \quad i \in \cal{P}~~ \label{const:hamP} \\
	& w_k,  ~\zeta_i  \in\{0,1\}, \quad {k\in \cal{K}}, i \in {\cal P} \cup {\cal N} ~~ \label{const:hamBin} 
\end{align}

The objective is now Hamming loss where for each $i\in \mathcal{N}$ the second term adds up the total number of selected rules satisfied by $i$. Compared to the 0-1 loss formulation, this formulation does not have the large number of constraints needed to track false positives. This leads to a much more compact formulation that is more computationally efficient to solve in practice.

However while it leads to a smaller IP formulation, we next observe that optimizing for Hamming loss can lead to arbitrarily worse performance on the 0-1 loss than optimizing for 0-1 loss directly.

\begin{theorem}[Hamming loss vs. 0-1 loss] Consider a dataset $\mathcal{D} = [(x_i, y_i)]_0^n$. Let $$K^*_{\ell} = \argmin_{K \subseteq \mathcal{K}} \sum_{i=1}^{n} \ell(x_i, y_i, K)$$ be the optimal DNF rule set $K$ for loss function $\ell$ using a set of candidate rules $\mathcal{K}$. 
There does \textbf{not} exist a global constant $M \in [1,\infty)$ such that:
$$
M \mathbb{E}_{x_i, y_i \sim \mathcal{D}}[\ell_{01}(x_i, y_i, K^*_{\ell_{01}})] \geq \mathbb{E}_{x_i, y_i \sim \mathcal{D}}[\ell_{01}(x_i, y_i, K^*_{\ell_{h}})]
$$
for all rule sets $\mathcal{K}$, data $\mathcal{D}$. In other words, when evaluating the 0-1 loss on a dataset $\mathcal{D}$, the rule set $K^*_{\ell_{h}}$ selected to minimize Hamming loss can perform arbitrarily worse than the rule set $K^*_{\ell_{01}}$ selected to minimize 0-1 loss.
\end{theorem}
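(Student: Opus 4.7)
The plan is to construct an explicit counterexample family parameterized by an integer $k$ so that the ratio of the two 0-1 losses grows without bound as $k\to\infty$. The driving intuition is the asymmetry in the Hamming loss: a negative point contributes the \emph{number} of selected rules it satisfies, so overlapping rules that jointly cover the same negative incur a multiplicative penalty. This pushes Hamming-optimal solutions toward conservative rule sets (possibly the empty set), even when that means leaving many positives uncovered, whereas 0-1 loss only charges a unit penalty per misclassified negative regardless of how many rules fire.

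Concretely, I would fix $k \ge 3$ and build a dataset with $k$ positive points $p_1,\dots,p_k$ and two negative points $n_1, n_2$, together with $k$ candidate rules $\mathcal{K} = \{r_1,\dots,r_k\}$ satisfying $\mathcal{K}_{p_i}=\{r_i\}$ and $\mathcal{K}_{n_1}=\mathcal{K}_{n_2}=\mathcal{K}$. (Realizing this with binary features is immediate: use one feature per rule, set feature $i$ to $1$ for $p_i$, and set all features to $1$ for the two negatives.) For any subset $S \subseteq \mathcal{K}$, exactly $k-|S|$ positives are uncovered and each negative is covered by $|S|$ rules in $S$, so
\begin{align*}
\ell_h(S) \;=\; (k-|S|) + 2|S| \;=\; k + |S|, \qquad \ell_{01}(S) \;=\; (k-|S|) + 2\cdot\mathbb{I}[|S|\ge 1].
\end{align*}
The first expression is strictly increasing in $|S|$, so $K^*_{\ell_h} = \emptyset$ uniquely; the second is minimized at $S=\mathcal{K}$ with value $2$ whenever $k \ge 3$.

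Evaluating 0-1 loss on these two optima gives $\mathbb{E}[\ell_{01}(K^*_{\ell_{01}})] = 2/(k+2)$ and $\mathbb{E}[\ell_{01}(K^*_{\ell_h})] = k/(k+2)$, so their ratio equals $k/2$. For any candidate constant $M \in [1,\infty)$, picking $k > 2M$ yields a dataset violating the asserted inequality, proving the theorem. The main conceptual step is recognizing that a single cluster of rules sharing a handful of negative points already suffices to separate the two losses arbitrarily; the calculations themselves are mechanical, and the uniqueness of $K^*_{\ell_h} = \emptyset$ removes any tie-breaking worry with the $\argmin$ in the statement.
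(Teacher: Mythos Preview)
Your proposal is correct and follows essentially the same construction as the paper: a family of instances with $k$ positive points each covered by a unique rule and two negative points covered by every rule, so that Hamming loss is minimized by the empty set while 0-1 loss is minimized by taking all rules, giving a ratio of $k/2$. The paper dresses this up geometrically (positives on a quarter circle, negatives at the origin, rules as lines through the origin), but your direct binary-feature realization is structurally identical and arguably better aligned with the paper's own data model.
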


\begin{proof}
Let $\mathcal{D}_n$ be a set of $n$ points sampled uniformly from the set $\mathcal{X} = \{(\textbf{x}, 1): x_1^2 +x_2^2 = 1, \textbf{x} \in \mathbb{R}_+^2 \} \}$. Consider the augmented set of points $\mathcal{D}$ which contains $\mathcal{D}_{n-2}$ together with 2 additional points at the origin with label -1, and the set of rules $\mathcal{K} = \{\mathbb{I}_{x_2 = \alpha x_1}: \alpha \in \mathbb{R}_+\}$. Graphically, the problem is represented in figure \ref{hamming_theorem_ex}. Note that the rules are simply lines intersecting the origin.
    \begin{figure*}[h]
    \centering\footnotesize
        \begin{subfigure}
          \centering
          \includegraphics[width=0.8\textwidth]{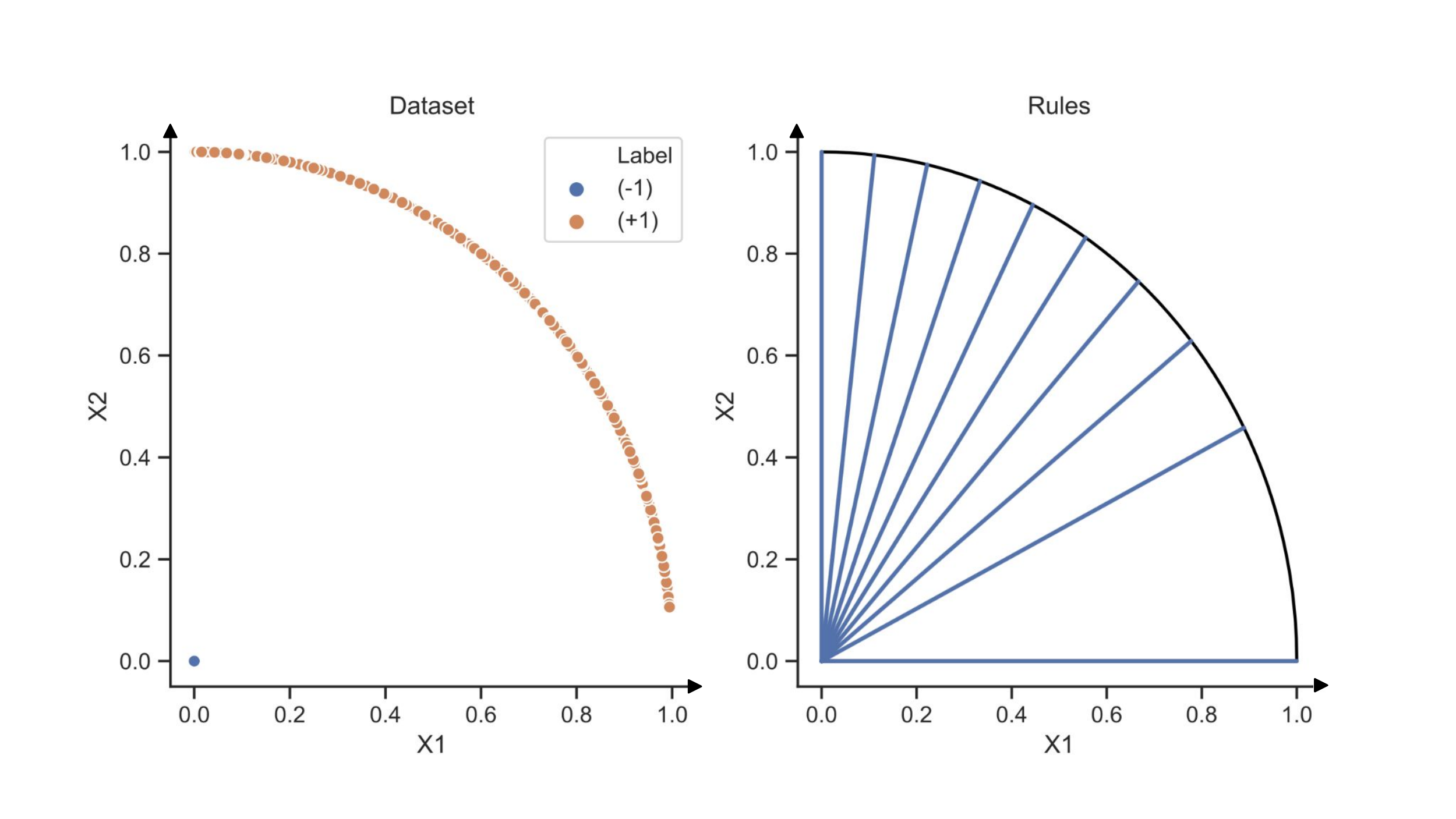}
        \end{subfigure}
         \caption {\label{hamming_theorem_ex} (Left) Sample dataset $\mathcal{D}$. Note there are two points at the origin, and $n-2$ data points in $\mathcal{D}_{n-2}$. (Right) Sample rules in $\mathcal{K}$. Note that they are all lines going through the origin. }
    \end{figure*}

Now consider the problem of constructing the optimal rule set. Note that to properly classify any data point in $\mathcal{D}_{n-2}$ correctly using rules from $\mathcal{K}$ we would need to include a rule that misclassifies the two points at the origin (as they meet every rule). Every data point in $\mathcal{D}_{n-2}$ also needs its own rule, as each rule only covers at most one point in $\mathcal{X}$. When optimizing for 0-1 loss, it is clear that the optimal rule set for $n > 4$ is to simply include the rules needed to classify all data points in $\mathcal{D}_{n-2}$ correctly which results in an expected loss of $\frac{2}{n}$. However, the asymmetry in Hamming loss means that including any rule, and by extension classifying any point in $\mathcal{D}_{n-2}$ correctly incurs a cost of 2. Thus the optimal solution when optimizing for Hamming loss is the empty rule set giving an expected 0-1 loss of $\frac{n-2}{n}$. Clearly no constant $M$ exists such that $M\frac{2}{n} \geq \frac{n-2}{n}$ for all $n$, proving the desired claim. 
\end{proof}


While theoretically, Hamming loss can lead to arbitrarily worse performance on the 0-1 classification problem, we decide to use it in our formulation out of practicality. Its compact formulation can be solved more efficiently than the 0-1 formulation, leading to a more computationally tractable framework. Empirically, models trained with Hamming loss also perform comparably to those trained with 0-1 loss (as discussed in section \ref{sec:hammingEmp}). In brief, we use Hamming loss because it is a computationally efficient proxy for 0-1 loss that performs well in practice, despite its theoretical limitations.

\subsection{Base formulation without fairness considerations}


In the preceding IP formulations, there is no need to ensure that $\zeta_i = 1$ only when data point $i$ is misclassified as  $\zeta_i=0$ in any optimal solution  provided that $\sum_{k \in \mathcal{K}_i} w_k=0$. However, this is not the case when additional fairness constraints are added to the formulation. To ready our formulation for the fairness setting, we add additional constraints to correctly track true positives. In our base formulation below, we also add a constraint on the total complexity of the selected rule set. Let $c_{k}$ denote the complexity of rule $k\in\K$ which is defined as a fixed cost of 1 plus the number of conditions in the rule. The constraint helps control over-fitting and also ensures the selected rule set is not too complicated and thus interpretable. Building upon the previous Hamming loss IP formulation, the problem for selecting the optimal rule set becomes:
	
	\begin{align}
	z_{mip} = \textbf{min} \sum_{i \in \mathcal{P}} \zeta_i + \sum_{i \in \mathcal{N}} \sum_{k \in \mathcal{K}_i} w_k \label{Mobj}\\
	\textbf{s.t.}\quad\quad  \zeta_i + \sum_{k \in \mathcal{K}_i} w_k &\geq 1 ~~~~~i \in \mathcal{P} \label{MmisP}\\
	C\zeta_i + \sum_{k \in \mathcal{K}_i} 2 w_k &\leq C ~~~~~i \in \mathcal{P} \label{MmisP2}\\
	\sum_{k \in \mathcal{K}} c_k w_k &\leq C \label{Mcomplex}\\
	w\in \{0,1\}^{|\mathcal{K}|},&~ \zeta\in \{0,1\}^{|\mathcal{P}|}  \label{Mbinary}
	\end{align}

Constraint \eqref{MmisP2} ensures that $\zeta_i$ can only take a value of 1 if no rules satisfied by $i \in \mathcal{P}$ are selected. Here, we use the fact that $c_k\ge2$ for all $k\in\K$ to get a tighter formulation. In principle, an even tighter formulation would replace the co-efficient of 2 with $c_k$. However, this complicates our column generation procedure, turning the pricing problem into a quadratic integer program - a much more computationally demanding problem to solve. Any approach to solve the pricing problem with a co-efficient of 2 (i.e. the formulation presented), and then substitute the true complexity into the master problem (i.e. use $c_k$ instead of 2 when solving the RMLP) runs the risk of cycling by repeatedly generating the same column. Constraint \eqref{Mcomplex} provides the bound on complexity of the final rule set.

We denote this integer program the Master Integer Program (MIP), and its associated linear relaxation the Master LP (MLP) (obtained by dropping the integrality constraint).

\subsection{Fairness constraints}\label{sec:fairconst}
To extend the IP model to incorporate the fairness criteria discussed in Section \ref{sec:fairMet} we add new constraints to the master problem.
For each group $g\in \G$ we denote the data points that have the protected feature $g$ with
	\begin{equation*}\mathcal{G}_g = \{i\in I : g_i = g \}\end{equation*}
and let $\mathcal{P}_g=\mathcal{P} \cap \mathcal{G}_g$ and $ \mathcal{N}_g=\mathcal{N} \cap \mathcal{G}_g$. For simplicity, we describe the constraints assuming $\G=\{1,2\}$ and note that extending it to multiple groups is straightforward and simply adds constraints that scale linearly with the number of groups.

\mysec{Equality of opportunity}
To incorporate the equality of opportunity criterion, we bound the difference in the false negative rates between groups. 
	\begin{align}
	\frac{1}{\abs{\mathcal{P}_1}}\sum_{i \in  \mathcal{P}_1 } \zeta_i 
	        - \frac{1}{\abs{\mathcal{P}_2}}\sum_{i \in  \mathcal{P}_2} \zeta_i &\leq \epsilon_1 \label{const:eo1}\\
	 \frac{1}{\abs{\mathcal{P}_2}}\sum_{i \in  \mathcal{P}_2} \zeta_i 
	        - \frac{1}{\abs{\mathcal{P}_1}}\sum_{i \in  \mathcal{P}_1} \zeta_i&\leq \epsilon_1 \label{const:eo2}
	\end{align}
Constraints \eqref{const:eo1} and \eqref{const:eo2} bound the maximum allowed unfairness, denoted by $\Delta$ in the previous section, by a specified constant $\epsilon_1\ge 0$. If $\epsilon_1$ is chosen to be 0, then the fairness constraint is imposed strictly. Depending on the application, $\epsilon_1$ can also be  larger than 0, in which case a prescribed level of unfairness is tolerated.

\mysec{Equalized odds} 	
Similar to our use of Hamming loss as a proxy for optimizing 0-1 loss for the negative class, we use it as a proxy for equalized odds. Specifically, instead of bounding the difference in false positive rates between groups we bound the difference in the Hamming loss terms for the negative class. Thus in conjunction with constraints \eqref{const:eo1} and \eqref{const:eo2}, we also include the following constraints in the formulation:
  	\begin{align}
    \frac{1}{\abs{\mathcal{N}_1}}\sum_{i \in \mathcal{N}_1} \sum_{k \in \mathcal{K}_i} w_k  - \frac{1}{\abs{\mathcal{N}_2}}\sum_{i \in \mathcal{N}_2} \sum_{k \in \mathcal{K}_i} w_k &\leq \epsilon_2\label{const:eo3}\\
     \frac{1}{\abs{\mathcal{N}_2}}\sum_{i \in \mathcal{N}_2} \sum_{k \in \mathcal{K}_i} w_k  - \frac{1}{\abs{\mathcal{N}_1}}\sum_{i \in \mathcal{N}_1} \sum_{k \in \mathcal{K}_i} w_k &\leq \epsilon_2,\label{const:eo4}
	\end{align}
where  $\epsilon_2\ge 0$ is a given constant. While constraints \eqref{const:eo1} and \eqref{const:eo2}  bound the misclassifications gap between group for the positive class $\P$, constraints \eqref{const:eo3} and \eqref{const:eo4} do the same for negative class $\Z$ using  Hamming Loss as a proxy. The tolerance parameter $\epsilon_2$ in  \eqref{const:eo3} and \eqref{const:eo4}  can be set equal to $\epsilon_1$ in  \eqref{const:eo1} and \eqref{const:eo2}, or, alternatively, $\epsilon_1$  and $\epsilon_2$  can be chosen separately. Note that we normalize the Hamming Loss terms to account for the difference in group sizes and positive response rates between groups. 

Similar to using Hamming loss in the objective, the Hamming loss proxy for false positives can lead to arbitrarily unfair classifiers with respect to the true equalized odds criterion. However once again, the Hamming loss proxy performs well empirically and generates classifiers that meet the true fairness constraint.
\mysec{Other fairness metrics} 	
While we restrict our focus to equality of opportunity and equalized odds, we note that our framework can be adapted for any notion of classification parity (i.e. balancing false positive rates or overall accuracy). The only caveat is that for notions of fairness involving false positives our framework would use the Hamming Loss term for false positives (similar to equalized odds). We also note that under some notions of fairness (i.e. ensuring similar accuracy between the two groups) there is no guarantee that the IP will be feasible for an arbitrary set of candidate rules. In such cases, a two stage approach could be used to first generate a set of candidate clauses that are feasible for a given fairness criteria prior to optimizing for accuracy.

\subsection{Column generation framework}\label{sec:colgen}
Due to the exponential nature of our formulation, it is not  computationally tractable to enumerate all possible rules.
Consequently, it is not practical to solve the MIP \eqref{Mobj}-\eqref{Mbinary} using standard branch-and-bound techniques	\cite{LandDoig} even without the fairness constraints. To overcome this problem, we solve the LP relaxation of MIP using the column generation technique \cite{IPref, GilmoreGomory} without explicitly enumerating all possible rules. 
Once we solve the LP to optimality or near optimality, we then restrict our attention to the rules generated during the process and pick the best subset of these rules by  solving a restricted MIP. We note that it is possible to integrate the column generation technique with branch-and-bound to solve the MIP to provable optimality using the branch-and-price approach \cite{Nemhauser:1998}. However, this would be quite time consuming in practice.

To solve the LP relaxation of the MIP, called the MLP, we start with a possibly empty subset $\hat\K\subset\K$ of all possible rules and solve an LP restricted to the variables associated with these rules only. Once this small LP is solved, we use its optimal dual solution to identify a missing variable (rule) that has a negative reduced cost. The search for such a rule is called the {\em pricing problem} and in our case this can be done by solving a separate integer program.
If a rule with a negative reduced cost is found, then  $\hat \K$ is augmented with the associated rule and the LP is solved again and this process is repeated until no  such rule can be found. For large problems, solving the MLP to optimality is not always computationally feasible. Out of practicality, we put an overall time limit on the column generation process and terminate without a certificate of optimality upon reaching it.

\mysec{Equality of opportunity} 
Given a subset of rules $\hat\K\subset\K$, let the restricted MLP, defined by \eqref{Mobj}-\eqref{Mcomplex}, \eqref{const:eo1}-\eqref{const:eo2} and denoted by RMLP, be the restriction of MLP to the rules in $\hat \K$. In other words, RMLP is the restriction of MLP where all variables $w_k$ associated with $k\in\K\setminus\hat\K$ are fixed to 0.
Let $(\mu,\alpha, \lambda,\gamma^1,\gamma^2)$ be an optimal {\em dual} solution to RMLP, where variables $\mu,\alpha,\lambda\ge0$ are associated with constraints \eqref{MmisP}, \eqref{MmisP2}, and \eqref{Mcomplex}, respectively. Variables $\gamma^1$ and $\gamma^2$ are associated with fairness constraints \eqref{const:eo1} and  \eqref{const:eo2}.
Using this dual solution, the reduced cost of a variable $w_k$ associated with a rule $k\notin\hat\K$ can be expressed as
\begin{equation}
\hat \rho_k =  \sum_{i \in \mathcal{N}} \mathbbm1_{\{k\in\K_i\}}
-\sum_{i \in \mathcal{P}}  \mu_i \mathbbm1_{\{k\in\K_i\}} 
+ \sum_{i \in \mathcal{P}} 2 \alpha_i  \mathbbm1_{\{k\in\K_i\}} +\lambda  c_k  \label{eq:redcost}
\end{equation}
where the first term simply counts the number of data points $i\in\Z$ that satisfy the rule $k$.
Note that variable $w_k$ does not appear in constraints \eqref{const:eo1} or  \eqref{const:eo2}  in  RMLP and consequently \eqref{eq:redcost} does not involve variables $\gamma^1$ or $\gamma^2$.
If there exists a $k\in\K\setminus\hat\K$ with $\hat \rho_k<0,$ then including variable $w_k$ in RMLP has the potential of decreasing the objective function. Also note that $\hat \rho_k\geq0$ for all $k\in\hat\K$ as the dual solution at hand is optimal.

We can now formulate an integer program to find a $k\in\K$ with the minimum reduced cost $\hat \rho_k$.  
Remember that  a decision rule corresponds to a subset of the binary features $J$ and classifies a data point with a positive response if the point has all the features selected by the rule. 
Let variable $z_j\in\{0,1\}$ for  $j\in J$  denote if the rule has feature $j$ and let variable  $\delta_i\in\{0,1\}$ for $i\in I$ denote if the rule misclassifies data point $i$.
Using these variables, the complexity of a rule can be computed as $(1 + \sum_{j \in J} z_j)$ and the reduced cost of the rule becomes:

    \begin{equation} \label{eq:pricingObj}
     \enskip \sum_{i \in \mathcal{N}} \delta_i + \sum_{i \in \mathcal{P}} (2\alpha_i - \mu_i ) \delta_i + 
     \lambda(1 + \sum_{j \in J} z_j)  .
    \end{equation}


The full pricing problem thus simply minimizes (\ref{eq:pricingObj}) subject to the constraints:
    \begin{align}
	D\delta_i + \sum_{j \in S_i} z_j &\leq D ~~~~~ i \in I^{-}\quad \label{PmisP}\\
	\delta_i +  \sum_{j \in S_i} z_j &\geq  1~~~~~i \in I^{+}  \quad\label{PmisZ}\\
	\sum_{j \in J} z_j &\leq D \quad \label{Pcomplex}\\
	z\in \{0,1\}^{|J|},&~ \delta\in \{0,1\}^{|\mathcal{P}|}\label{Pint}\qquad
	\end{align}
where the set $I^-\subseteq I$ contains the indices of $\delta_i$ variables that have a negative  coefficient in the objective (i.e. $2\alpha_i - \mu_i < 0$), and $I^+=I\setminus I^-$. Note that constraints \eqref{PmisP} and \eqref{PmisZ} ensure that $\delta_i$ accurately reflects whether the new rule classifies data point $i$ with a positive label, and constraint \eqref{Pcomplex} puts an explicit bound on the complexity of any rule using the parameter $D$. This individual rule complexity constraint can be set independently of $C$ in the master problem or can simply be set to $C - 1$.  

We note that  in \cite{dash2018boolean} a similar pricing problem was formulated using the constraints $\delta_i + z_j\leq 1$  for all $i \in \mathcal{P},j \in S_i$, in place of \eqref{PmisP}.
The formulation we use here is computationally more efficient as it has much fewer constraints.


\mysec{Equalized odds} 	
In this case the RMLP is defined by \eqref{Mobj}-\eqref{Mcomplex}, \eqref{const:eo1}-\eqref{const:eo4} and note that unlike  \eqref{const:eo1} and  \eqref{const:eo2}, constraints \eqref{const:eo3} and  \eqref{const:eo4} do involve variables $w_k$.
Let $(\mu,\alpha,\lambda,\gamma^1,\gamma^2,\gamma^3,\gamma^4)$ be an optimal {\ dual} solution to RMLP, where variables  $\gamma^3$ and $\gamma^4$ are associated with fairness constraints \eqref{const:eo3} and  \eqref{const:eo4}, respectively.
Using this dual solution, the reduced cost of a variable $w_k$ associated with  $k\notin\hat\K$ is similar to the expression in \eqref{eq:redcost}, except it has the following 4 additional terms:

\begin{align*}
  \sum_{i \in \mathcal{N}_1}  \frac{\gamma_3}{\abs{\mathcal{N}_1}} \mathbbm1_{\{k\in\K_i\}}
- \sum_{i \in \mathcal{N}_1} \ \frac{\gamma_4}{\abs{\mathcal{N}_1}}\mathbbm1_{\{k\in\K_i\}}
- \sum_{i \in \mathcal{N}_2}  \frac{\gamma_3}{\abs{\mathcal{N}_2}}\mathbbm1_{\{k\in\K_i\}}
+ \sum_{i \in \mathcal{N}_2} \frac{\gamma_4}{\abs{\mathcal{N}_2}}\mathbbm1_{\{k\in\K_i\}}
\label{eq:redcostH}
\end{align*}

Consequently, the 
pricing problem becomes
    \begin{align*}
	~&\textbf{min} ~~  
	 (1+\frac{\gamma_3 - \gamma_4}{\abs{\mathcal{N}_1}})\sum_{i \in \mathcal{N}_1} \delta_i 
	+ (1+\frac{\gamma_4 - \gamma_3}{\abs{\mathcal{N}_2}})\sum_{i \in \mathcal{N}_2} \delta_i  
	&+ \sum_{i \in \mathcal{P}} (2\alpha_i - \mu_i) \delta_i 
	+ \lambda(1 + \sum_{j \in J} z_j) 
	\\	&\textbf{s.t.} ~~\eqref{PmisP}-\eqref{Pint}.
	\end{align*}

\mysec{Upper Bounds on the variables in the LP Relaxation} 
When solving the LP relaxation of the IP model we relax not just the integrality of the decision variables, but also the upper bound of 1 on binary variables. This is because modern optimization solvers implicitly add the upper bound on the binary decision variables, causing an additional term in the formula for the reduced cost that is not accounted for in our model. Take a simple example with one data point with a single binary feature ($X = [1]$) and a positive response ($Y = 1$). Solving the master LP problem returns the following dual values:

\begin{table}[h]
    \centering
    \begin{tabular}{c|c|c|c}
         Constraint & (4)& (5)& (6) \\ \midrule
         Dual Value & $\mu =1$& $\alpha = 0$& $ \lambda =0$  
    \end{tabular}
    \caption{Dual Values}
    \label{tab:my_label}
\end{table}

The pricing problem predictably returns the optimal rule consisting of the single feature. However solving the master LP with the optimal rule set gives identical dual values - prompting the column generation process to cycle. However, if we explicitly add the upper bound on the binary variable (i.e. $w_k \leq 1$), we see that it has an associated dual value of $-1$ giving the optimal rule a reduced cost of 0 in the pricing problem (and thus a certificate that the LP relaxation has been solved). However, such a dual variable cannot be integrated into the pricing problem formulation as each rule would have its own associated constraint and thus the reduced cost term cannot be integrated into the objective of the pricing problem. This is not just a feature of the toy example, this cycling phenomenon occurred during initial experiments with our formulation. To get around this issue, we remove the constraint on $w_k$ being binary (i.e. we allow a rule to be included multiple times in a rule set). During our experiments, we did not observe any instances where $w_k > 0$ for a solution.

\section{Computational approach} \label{sec:comp}

    We use the Python interface of Gurobi \cite{gurobi} to solve the linear and integer programs in our formulation. To solve the MLP we use a barrier interior point method, which performs better in practice than simplex for large sparse problems with many columns. For the pricing problem we use the default settings and return all solutions generated during the algorithm's run with negative reduced costs. In many problems, solving either the integer pricing problem or the MLP to optimality can be computationally intensive. To place a practical limit on the overall problem, we set time limits on the pricing problem and the overall column generation (CG) process to solve the MLP.

    During the solving process, the MIP solver retains a set of feasible solutions generated for the master problem. We evaluate the 0-1 loss on the training set for all solutions generated, and return the rule set with the lowest 0-1 loss rather than the optimal solution for the hamming loss problem. In other words, while we optimize for hamming loss, we select the best solution from the candidate pool of feasible solutions using 0-1 loss. This comes at a negligible computational cost (i.e. simply the cost of doing inference on the training set for each solution), and can lead to a small bump in performance with respect to the 0-1 problem. Table \ref{tab:0-1select} summarizes the impact of selecting a final rule set from the solution pool using hamming loss and accuracy respectively. Selecting the final rule set using accuracy leads to a modest increase in both train and test set accuracy across all three datasets. Note that while the bump in performance is modest, it comes at practically no additional computational cost. 
    
       \begin{table*}[h]
\centering\footnotesize
\caption{\label{tab:0-1select} Effect of selecting a final rule set from the solution pool using hamming loss and accuracy (standard deviation in parenthesis)}
\setlength{\tabcolsep}{5pt} 
\begin{tabular}{l  c c c c c c }		\toprule
& \multicolumn{2}{c}{Adult} & \multicolumn{2}{c}{Compas} & \multicolumn{2}{c}{Default} \\
 &Hamming & Accuracy &Hamming & Accuracy & Hamming & Accuracy\\\midrule
Hamming Loss & 5305 (64) & 5568 (441) & 1624 (54)  & 1715 (306) & 5950 (19) & 5956 (15)\\
Train Accuracy & 81.9 (0.1) & 82.1 (0.2) & 66.3 (0.3) & 66.5 (0.3) & 77.9 (0.003) &77.9 (0.03) \\
Test Accuracy & 81.6 (0.7) & 81.8 (0.8) & 66.2 (2.0) & 66.3 (2.7) &78.1 (0.02)&78.1 (0.5)\\
\bottomrule
\end{tabular}%
\end{table*}%

    For small problems, defined roughly as having under 2000 data points and a couple hundred binary features, we  employ the  CG framework described in Section \ref{sec:colgen}, however for larger problems we use an approximate version of the framework to limit the overall run time. For large datasets, the integer programming formulation for the pricing problem may become computationally intractable and therefore we sub-sample both our training data points and potential features to have on average 2000 rows and 100000 non-zeros, we then solve the pricing problem on the reduced problem. 
    
    To warm start our column generation process, we start by mining a candidate set of rules from quick tree-based heuristics similar to \cite{birbil2020rule} (see Section \ref{sec:ruleMineEmp} for a discussion on how warm starting saves computationally intensive CG iterations). 
    If a large number of candidate rules are generated (for both the heuristic and integer programming formulation) we return the 100 rules with the lowest reduced costs. This is to trade off the number of column generation iterations needed with the size of the restricted LP. Adding all columns leads to larger candidate rule sets that slow down solving the RLP, while only adding the best new rule tends to lead to more iterations of column generation needed.

	\section{Experimental results} \label{sec:exp}
  
    To benchmark the performance of our algorithm, we evaluate it on three standard fair machine learning datasets: default \cite{UCI}, adult \cite{UCI}, and compas \cite{propub}. For our experiments we include the sensitive attribute as a feature, though our framework can easily work with the sensitive attribute exclude for inference. We refer to \cite{corbettdavies2018measure} as to why including the sensitive attribute can lead to fairer results, absent regulatory constraints on disparate treatment. Details on the sources of each dataset and how we pre-process data can be found in the appendix.

   To convert the data to be binary-valued, we use a standard methodology also used in \cite{dash2018boolean, su2016, wang2017}. 
    For categorical variables $j$ we use one-hot encoding to binarize each variable into multiple indicator variables that check $X_{j} = x$, and the negation $X_{j} \neq x$.
    For numerical variables we compare values against a sequence of thresholds for that column and include both the comparison and it's negation (i.e. $X_j \leq 1, X_j \leq 2$ and $X_j > 1, X_j > 2$). For our experiments we use the sample deciles as the thresholds for each feature. We use the binarized data for all the algorithms we test to control for the binarization method.

    For all our results we used ten-fold cross validation and performed a two-stage process to generate a pool of candidate rules and build a rule set. First, we run the CG algorithm on the training data for a subset of potential hyperparameters (typically 3 different complexity limits and epsilon values). Then, we use these candidate rules and solve the master IP for a grid of potential epsilon values and complexity limits. To select which complexity values to test we look at which complexity value leads to the best cross-validated accuracy in the problem without fairness constraints and test values around and including it. We use a 45 second time limit for each iteration of the pricing problem, and a five minute overall time limit for the column generation process.

    \subsection{Hamming loss}\label{sec:hammingEmp}
    To analyze the empirical performance of using Hamming Loss instead of 0-1 loss we ran a sequence of experiments where we evaluated the 0-1 loss of rule sets trained under both objectives. For each experiment we used the same pool of candidate rules generated by running our column generation procedure with different hyperparameters. We then ran the master model under both objectives with a fixed fairness threshold ($\epsilon$) and evaluated the rule set's performance on both training and testing data. 10-fold cross-validation was used to tune hyperparameters for each model separately.
    
    Figure \ref{hamming_v_01} shows the results for the compas dataset \cite{propub} with $\epsilon = 0.1$. The left side of the figure is a violin plot that shows the distribution of accuracy results over 1000 random splits of the dataset when optimizing for each objective (each black dot represents an accuracy for a single split). We can see that Hamming Loss and Accuracy have practically indistinguishable performance in terms of both train and test set accuracy. However, the Hamming Loss model solves the IP problem in a fraction of the time of the model minimizing accuracy as seen in the plot on the right. This shows that empirically, Hamming Loss is an effective proxy for accuracy - leading to comparable performance in a fraction of the computation time. Results for other datasets and values for $\epsilon$ can be found in the appendix and show a similar trend.

    \begin{figure}[!htb]
    \centering
    \begin{subfigure}
      \centering
      \includegraphics[width=0.95\textwidth]{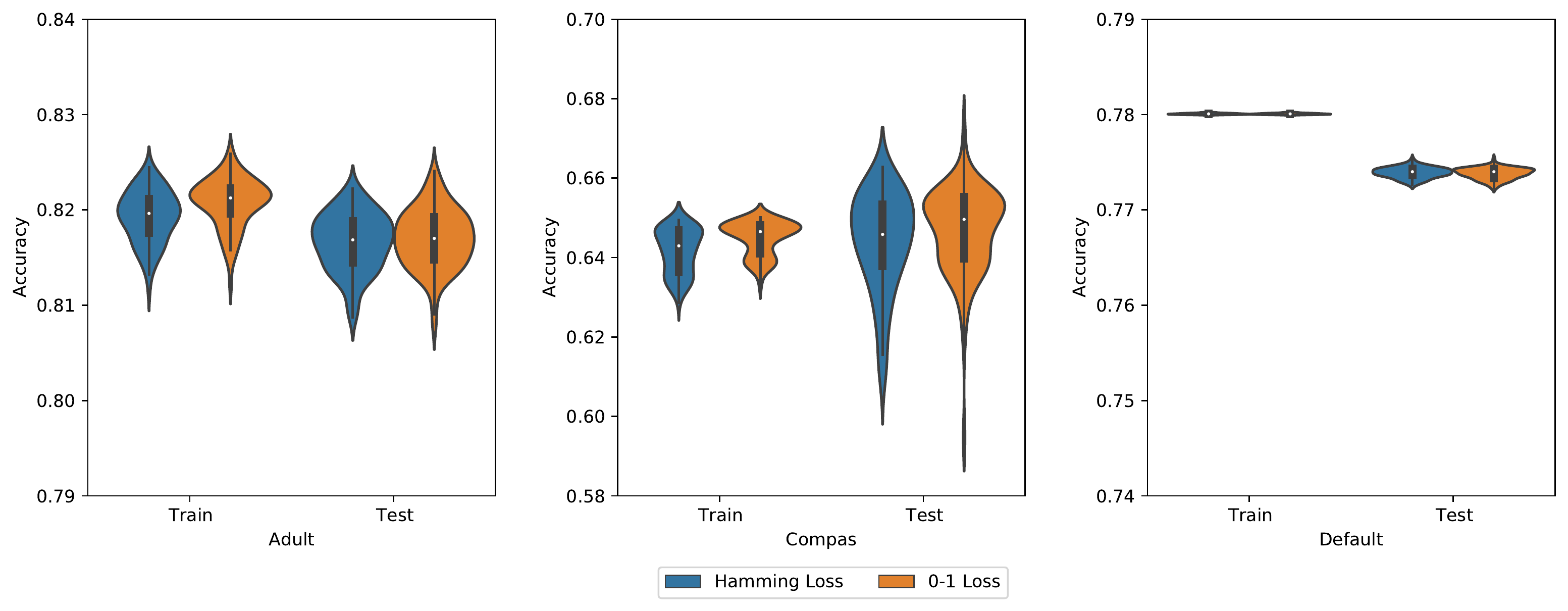}
    \end{subfigure}
  \caption {\label{hamming_v_01} Relative performance of models trained to maximize accuracy or minimize Hamming Loss respectively under equality of opportunity constraint ($\epsilon = 0.025$). The violin plots show the distribution of train and test set accuracy for models under each objective for 100 random splits of the datasets.}
\end{figure}

 \begin{table*}[h]
\centering\footnotesize
\caption{\label{hammingComputationTable} Mean (standard deviation) computation time in seconds for master model under different objectives with 600 s time limit}
\setlength{\tabcolsep}{5pt} 
 \begin{adjustbox}{center}
\begin{tabular}{l c c c c c c c c c}		\toprule
& \multicolumn{3}{c}{Adult} & \multicolumn{3}{c}{Compas} & \multicolumn{3}{c}{Default} \\
$\epsilon$&0.025&0.1&0.5&0.025&0.1&0.5&0.025&0.1&0.5\\\midrule
Hamming Loss& 35.5 (48.6) & 34 (24.1) & 30.1 (27.1) & 4.0 (3.5) & 3.5 (3.2) & 0.49 (0.3) & 3.5 (0.8) & 3.4 (0.7) & 3.7 (0.7) \\\midrule
0-1 Loss & 546 (101) & 594 (33) & 520 (78) & 11.4 (6.5) & 12.1 (6.5) & 6.0 (3.6) & 12.3 (6.1) & 12.8 (7.2) & 12.5 (4.7) \\
\bottomrule
\end{tabular}%
\end{adjustbox}
\end{table*}%

In addition to using Hamming Loss as a proxy for 0-1 error, we also use Hamming Loss terms to bound the false negative error rate in the equalized odds formulation (constraints (\ref{const:eo3}) and (\ref{const:eo4})) A natural question is how well the Hamming Loss terms work as a proxy for false negatives in equalized odds (i.e. whether the constraint does in fact bound allowable unfairness under equalized odds). Figure \ref{hamming_fair} plots the epsilon used to constrain the Hamming Loss proxy for equalized odds on the x-axis, versus the true equalized odds of the resulting rule set both on the training and testing data (averaged over 10 folds) for the three datasets. We can see that in practice, the Hamming Loss version of the false negative constraint serves as an effective proxy for the true fairness constraint. Empirically, the rule sets trained with the Hamming Loss constraint never have a true equalized odds larger than the prescribed $\epsilon$ for the training data. The out of sample equalized odds does exceed the constraint in some scenarios, as expected due to it being new data not used to train the model, but tracks relatively close to the true equalized odds on the training data. This is promising evidence that the Hamming Loss terms are both an effective proxy for the true fairness constraint and that the fairness guarantees on the training set generalize well to unseen data.

    \begin{figure}[!htb]
        \centering
        \begin{subfigure}
          \centering
          \includegraphics[width=0.32\textwidth]{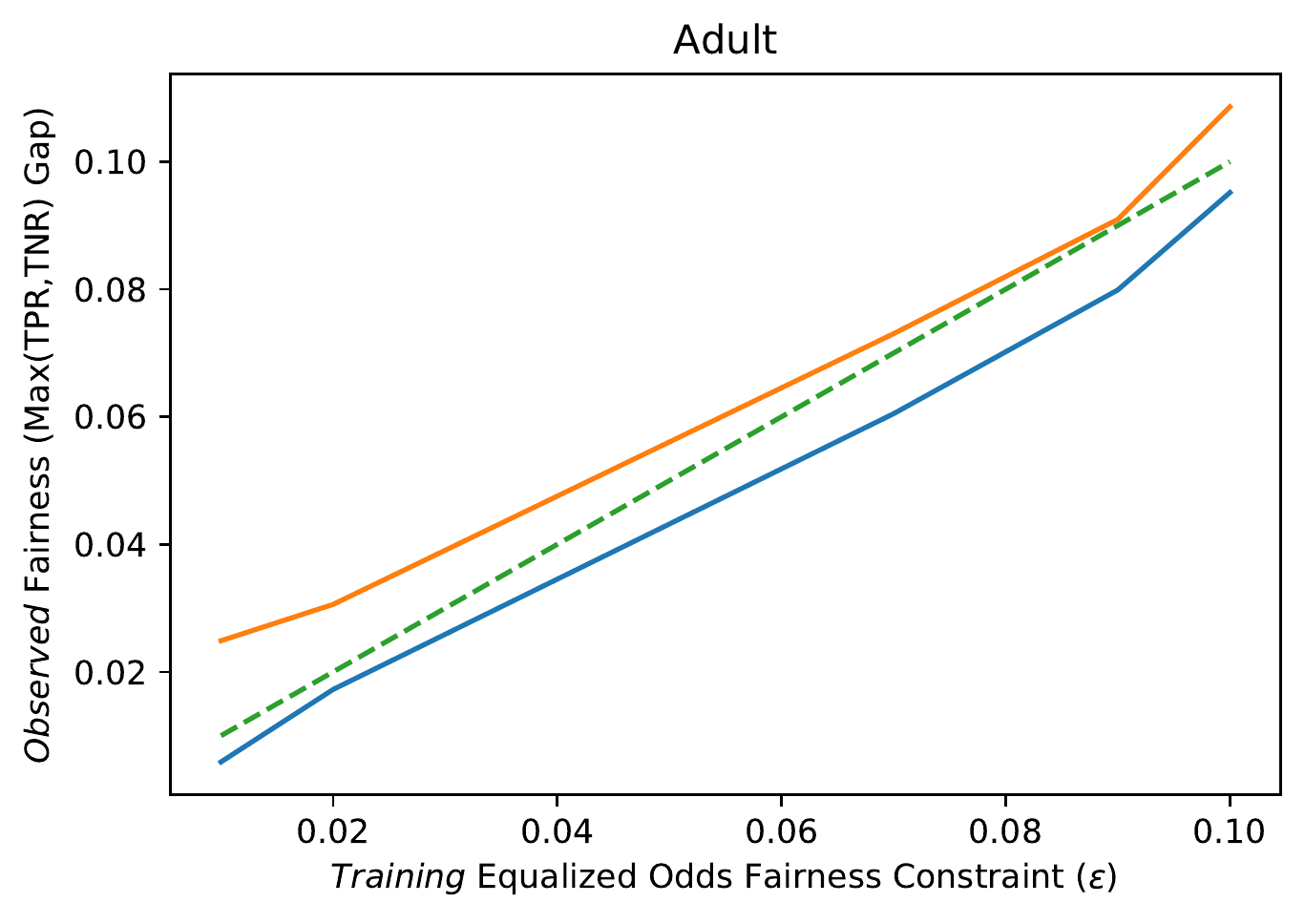}
        \end{subfigure}
        \begin{subfigure}
          \centering
          \includegraphics[width=0.32\textwidth]{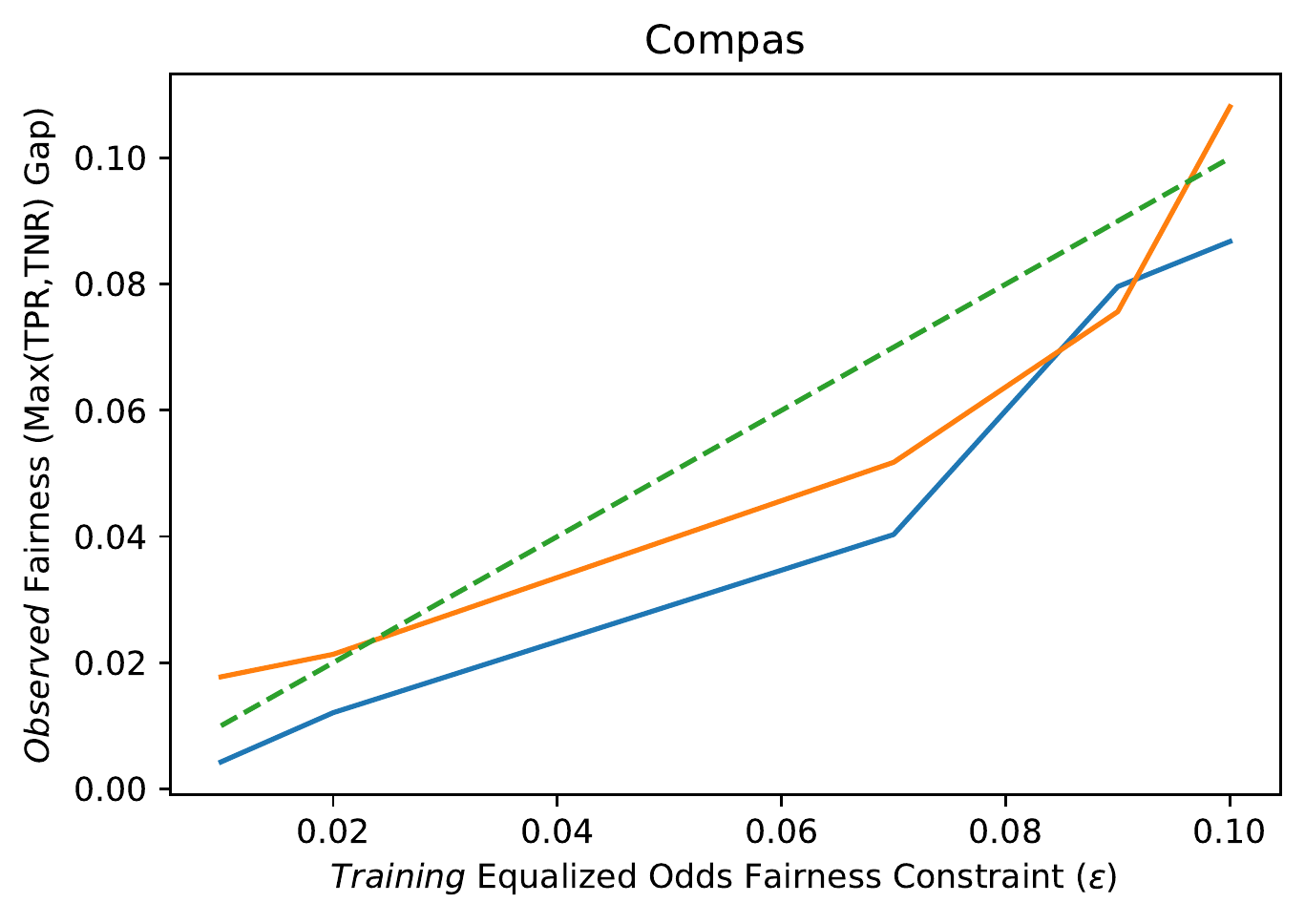}
        \end{subfigure}
        \begin{subfigure}
          \centering
          \includegraphics[width=0.32\textwidth]{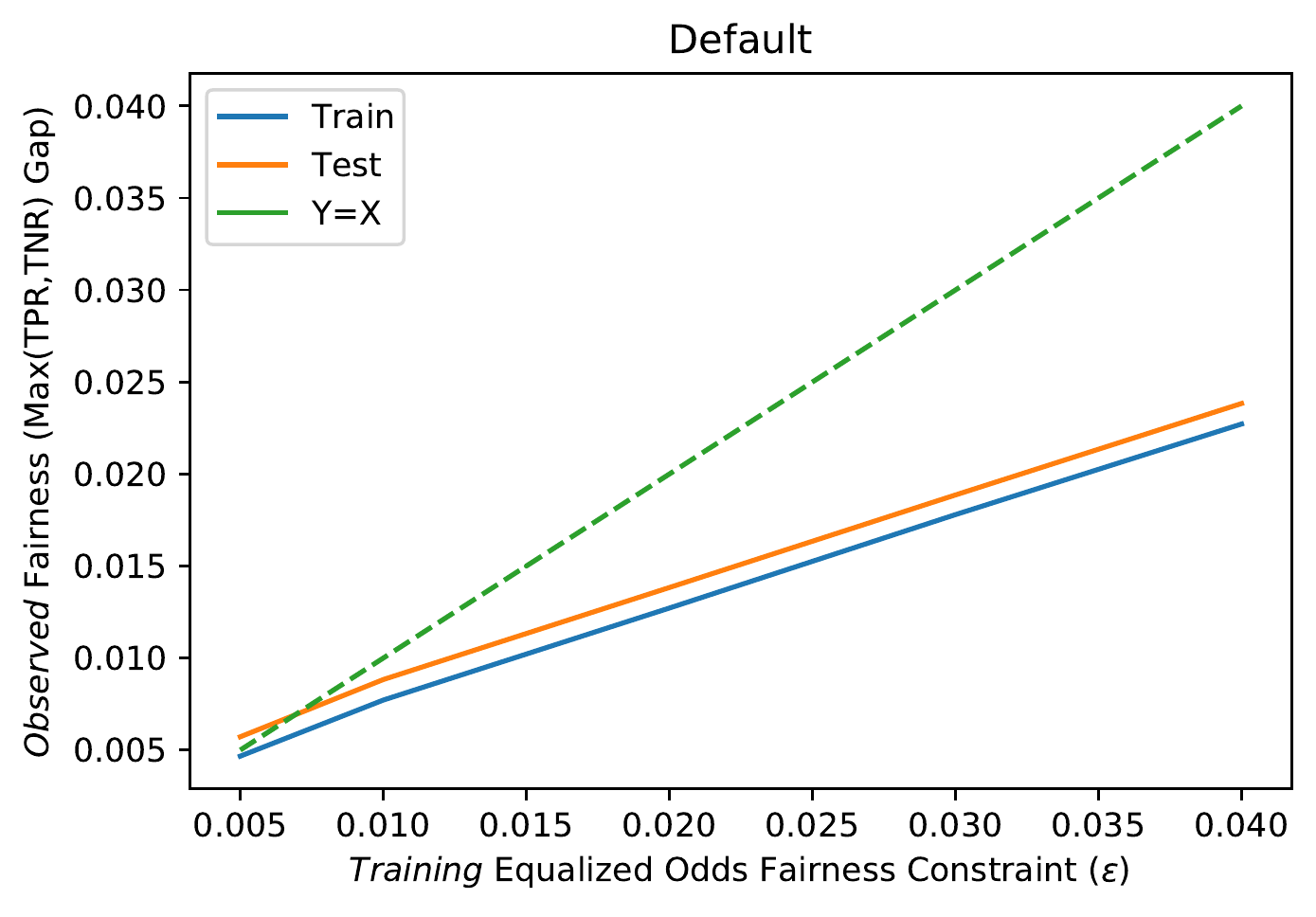}
        \end{subfigure}
         \caption {\label{hamming_fair} Generalization of equalized odds constraint with Hamming loss proxy for false negative rate versus true equalized odds gap measure on both training and testing data.  }
    \end{figure}

    \subsection{Rule mining}\label{sec:ruleMineEmp}
    As we do not perform branch and price, our column generation procedure can be viewed as a heuristic for generating a set of candidate rules to use in the master IP model. To that end, we evaluated how our method compares to other rule mining heuristics. Following the procedure outlined in \cite{birbil2020rule}, we mined rules from a variety of tree based models including CART, a random forest classifier, and a fair decision tree learned through the fair learn package \cite{agarwal2018reductions}. We used tree-based models to mine rules because they have a natural correspondence to rule sets, as every path from the root of the tree to any leaf node can be seen as a rule. We extract all such rules where the label of the associated leaf node is for the positive class.  For each tree-based model, we trained the algorithm with a variety of hyperparameters and extract every rule generated by the classifier. We then run the master IP model with the extracted rules for a fixed fairness criteria ($\epsilon$) and do cross-validation to select the best complexity bound with each rule set. 
    
    Figure \ref{rule_mining} shows the relative performance of each rule set on the compas dataset for $\epsilon = 0.01$ with the equality of opportunity metric. FairCG produced the rule sets that lead to the lowest objective value when compared to any other rule sources. However, there is a modest increase in performance by looking at the union of all the rule sets. While they do not lead to superior performance, the rules mined from quicker methods like random forests provide a powerful warm start to the column generation procedure to solve the MIP. The second plot in figure \ref{rule_mining} shows the number of column generation iterations needed to find a collection of rules with comparable accuracy to the mined rules from a random forest model. The results show that using a pre-mined rule set can reduce the number of time intensive iterations of column generation needed. Results for additional datasets and $\epsilon$ can be found in the appendix.

    \begin{figure}[!htb]
        \centering
        \begin{subfigure}
          \centering
          \includegraphics[width=0.47\textwidth]{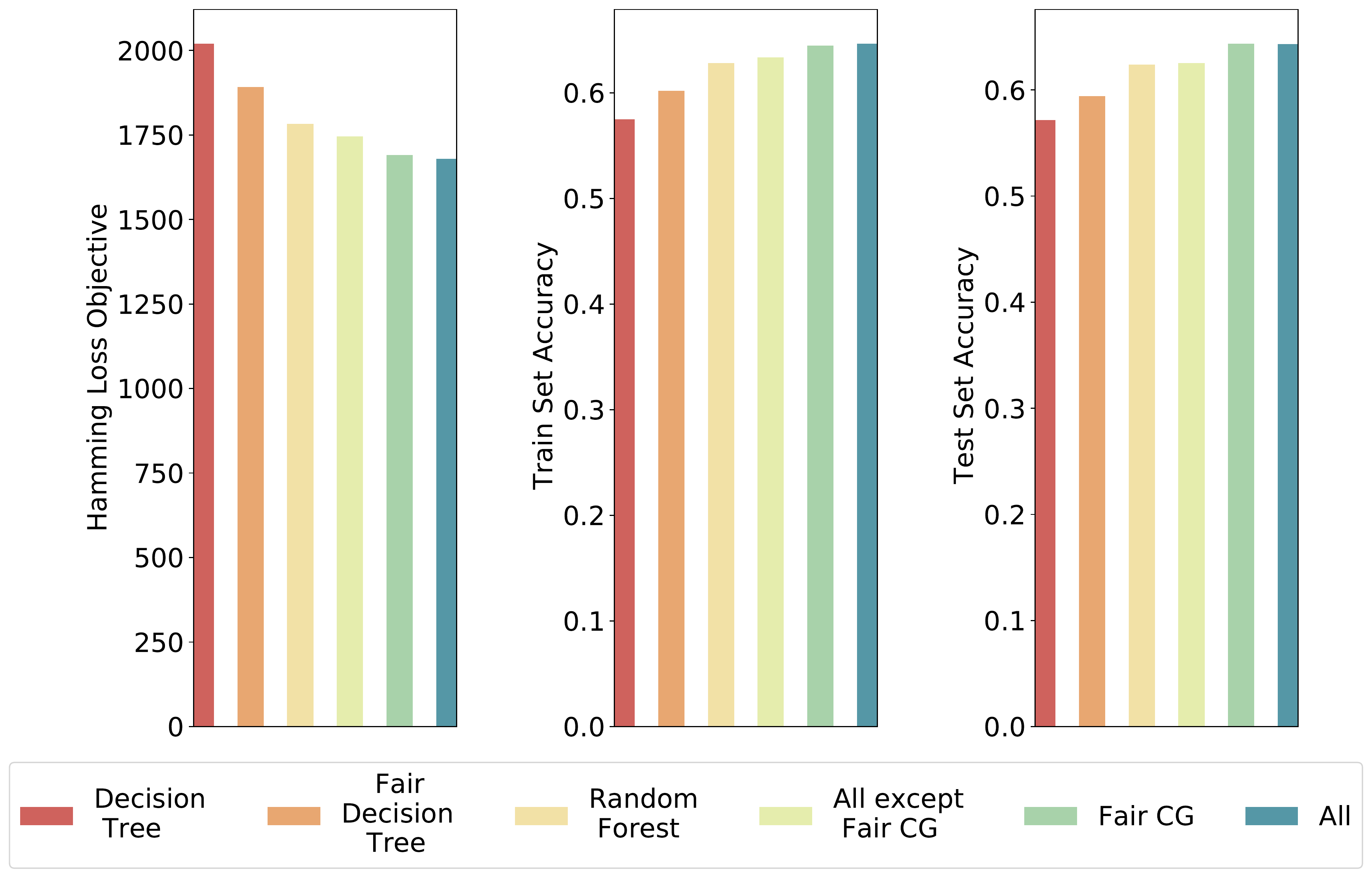}
        \end{subfigure}
        \begin{subfigure}
          \centering
          \includegraphics[width=0.47\textwidth]{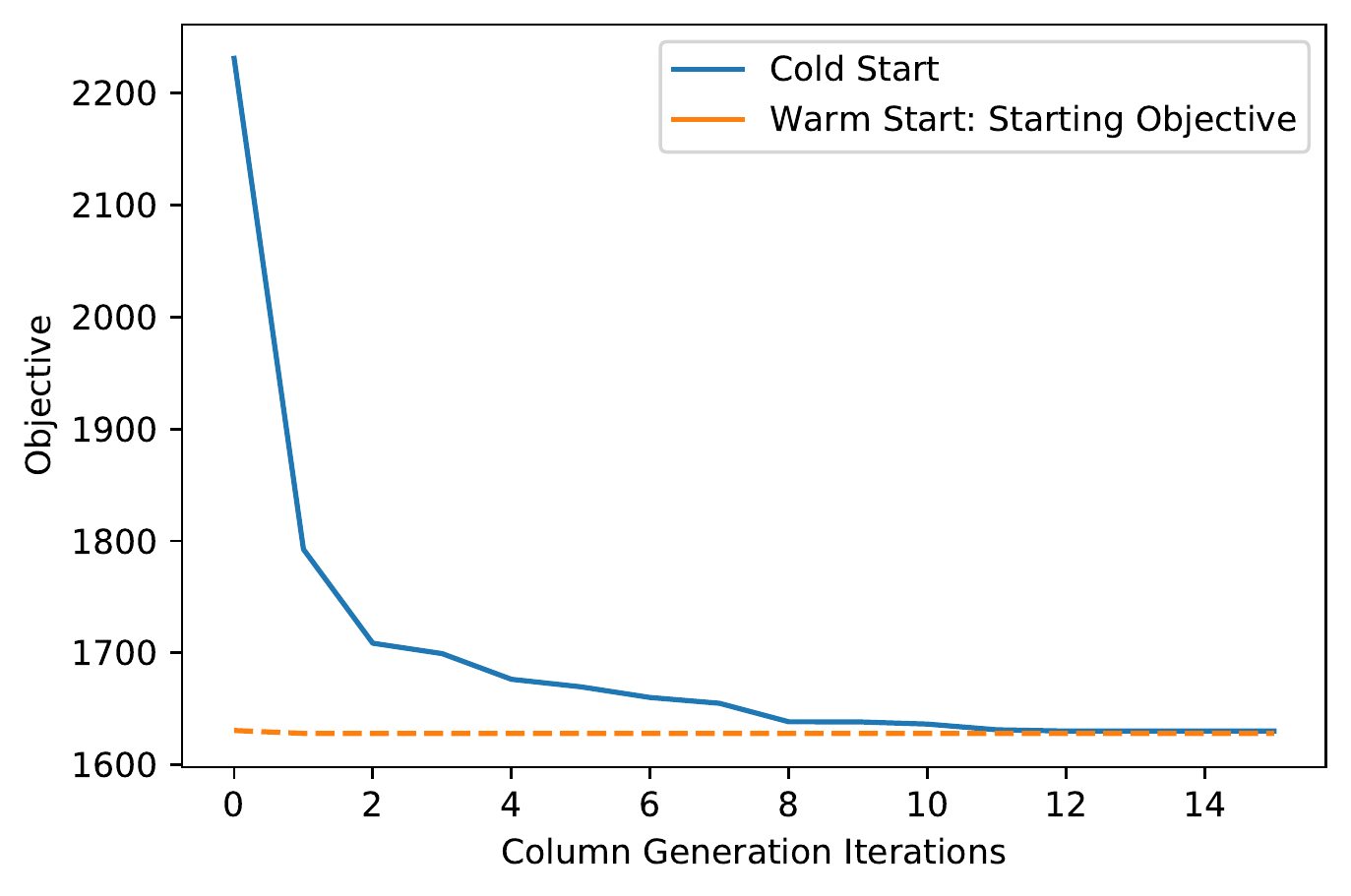}
        \end{subfigure}
         \caption {\label{rule_mining} (Left) Relative performance of rules mined from different sources ('All' signifies union of all the rule sets). (Right) Number of column generation iterations (blue line) needed to achieve comparable performance to rules mined from random forest model (dotted line). }
    \end{figure}

    \subsection{Fairness - accuracy trade-off}\label{sec:fairext}
    
    We compared our performance against three other popular interpretable fair binary classification models: Zafar 2017 \cite{Zafar_2017}, Hardt 2016 \cite{hardt2016equality}, and the exponential gradient method included in the Fair-Learn package \cite{agarwal2018reductions}. The first two methods build fair logistic regression classifiers but take different approaches. Zafar 2017 formulates the problem as a constrained optimization model and uses a convex relaxation of the fairness constraint. Hardt 2016 also leverages a logistic regression classifier but achieves fairness by selecting different discrimination thresholds for the sensitive groups. The exponential gradient method from Fair-Learn works by solving a sequence of cost-sensitive classification problems to construct a randomized classifier with low error and the desired fairness. The framework works with any classifier, however for our experiments we chose to use a decision tree as the base learner. 
    
    We started by comparing the algorithm's predictive accuracy in the absence of fairness criteria. Table \ref{accTableeNoConst} shows the 10-fold mean and standard deviation accuracy for each algorithm without fairness criteria (i.e. $\epsilon = 1$). While the algorithms are not trained to consider fairness, we report the average test set 'unfairness' of each algorithm as a baseline for the amount of discrimination that happened in the absence of controls on fairness. On two of the three datasets (compas and default), rule sets have the strongest predictive performance. However, on the adult dataset the rule sets were outperformed by the two logistic regression based classifiers (Zafar and Hardt).

\begin{table*}[h]
\centering
\caption{\label{accTableeNoConst} Mean accuracy and fairness results with no fairness constraints (standard deviation in parenthesis). Equality of opportunity and equalized odds refer to the amount of unfairness between the two groups under each fairness metric.}
\setlength{\tabcolsep}{5pt} 
\begin{tabular}{l l  c c c c }		\toprule
& & Fair CG & Zafar 2017 & Hardt 2016 & Fair Learn \\\midrule
\multirow{3}{*}{Adult} & Accuracy & 82.5 (0.5) & 85.2 (0.5) & 83.0 (0.4) & 82.4 (0.4) \\
& Equality of Opportunity & 7.6 (0.5) & 11.9 (3.7) & 18.2 (4.8) & 11.5 (4.6) \\
& Equalized Odds & 7.6 (0.5) & 11.9 (3.7) &  18.2 (4.8) & 11.5 (4.6) \\ \midrule
\multirow{3}{*}{Compas} & Accuracy & 67.6 (1.1) & 64.6 (1.9) & 65.9 (2.7) & 65.8 (2.9)\\
& Equality of Opportunity & 23.8 (5.3)& 42.8 (5.4) & 23.7 (6.4) & 21.7 (7.1) \\
& Equalized Odds &24.1 (5.1) & 47.6 (5.8)& 27.0 (5.2) & 24.9 (4.5) \\ \midrule
\multirow{3}{*}{Default} & Accuracy & 82.0 (0.7) & 81.2 (0.8) & 77.9 (1.7) & 77.9 (1.7) \\
& Equality of Opportunity & 1.3 (0.6) & 2.7 (1.9) & 0 (0) & 0 (0) \\
& Equalized Odds & 1.9 (0.5) & 4.2 (2.5) & 0 (0) & 0 (0) \\ 
\bottomrule
\end{tabular}%
\end{table*}%

   We now consider adding constraints on the allowable unfairness. For each model we varied the hyperparameters in all the algorithms, performing 10-fold cross validation for each hyperparameter, to generate the accuracy fairness trade-offs.  Figure \ref{benchmark_combined} plots the fairness accuracy trade-offs under both notions of fairness. Fair CG generated classifiers that performed well, dominating all other fair classifiers on two of the three datasets. Our algorithm performs especially well when generating classifiers under strict fairness. Specifically, we dominate all other algorithms in regimes where unfairness is restricted to less than 2.5\% with either fairness criterion. 
   Overall these results show that our framework is able to build interpretable models that have competitive accuracy and substantially improved fairness.   Moreover, our algorithm allows for especially fine control over unfairness. Figure \ref{diverging_tpr} shows the effect of relaxing the fairness constraint and its associated affect on the true positive rate of both groups for the compas dataset.
   We emphasise that the allowed unfairness level during training translates directly to (practically) the same observed unfairness level in testing, thus establishing the robustness of our approach. For the remainder of our results, and more specifics on our experimental framework we refer you to the the appendix.

    \begin{figure*}[t]
    \centering
        \begin{subfigure}
          \centering
          \includegraphics[width=0.4\linewidth]{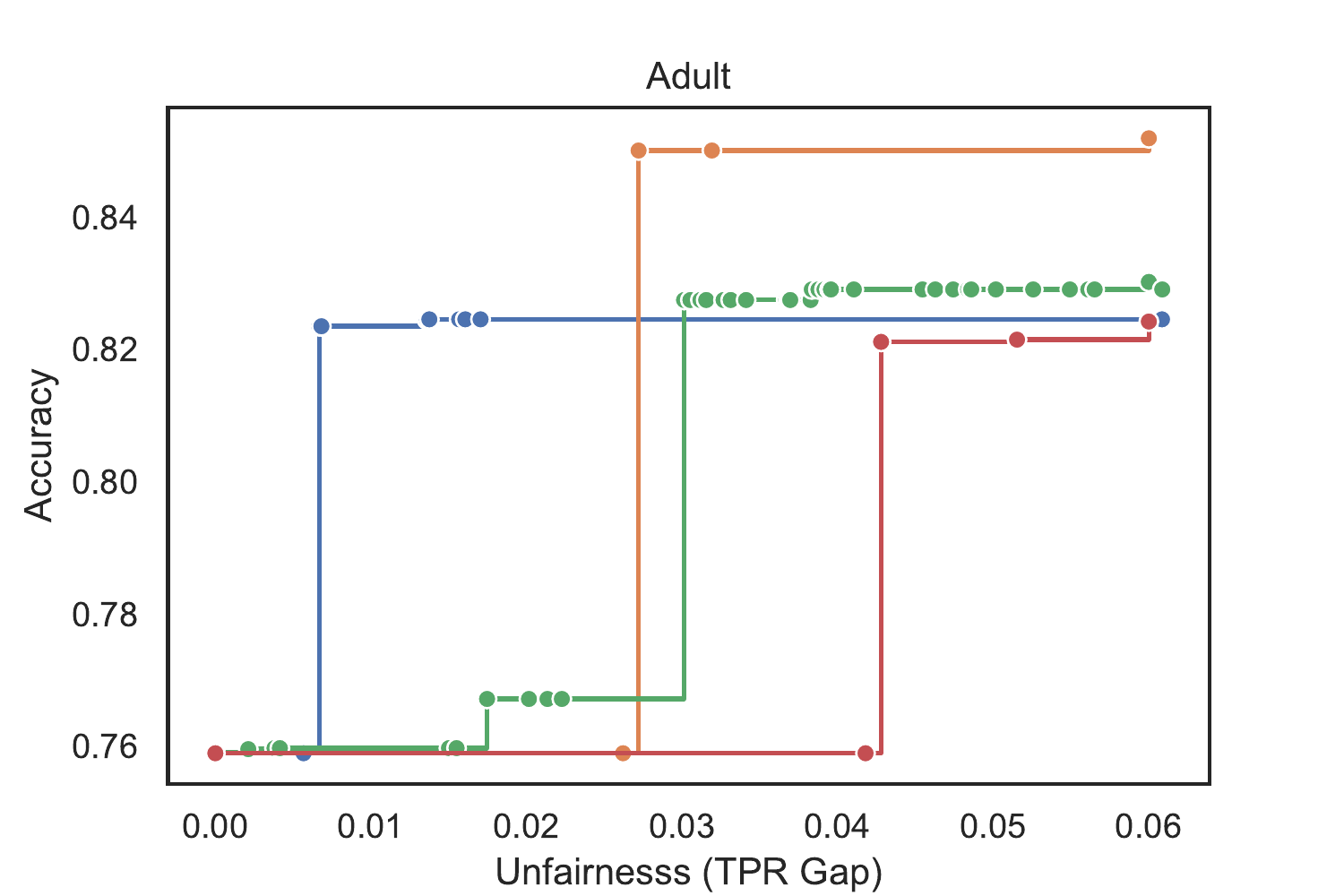}
        \end{subfigure}
        \begin{subfigure}
          \centering
          \includegraphics[width=0.4\linewidth]{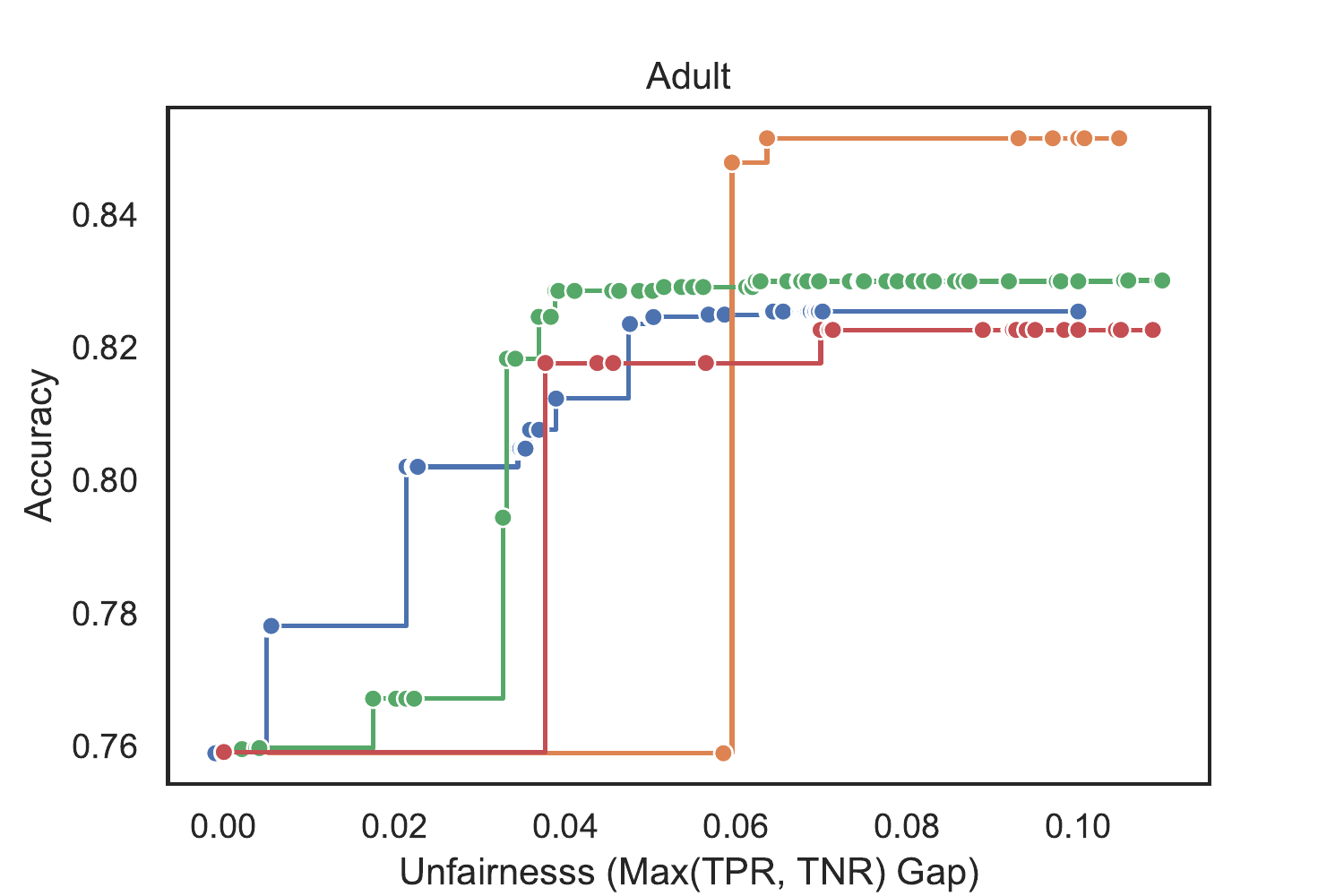}
        \end{subfigure}
        \begin{subfigure}
          \centering
          \includegraphics[width=0.4\linewidth]{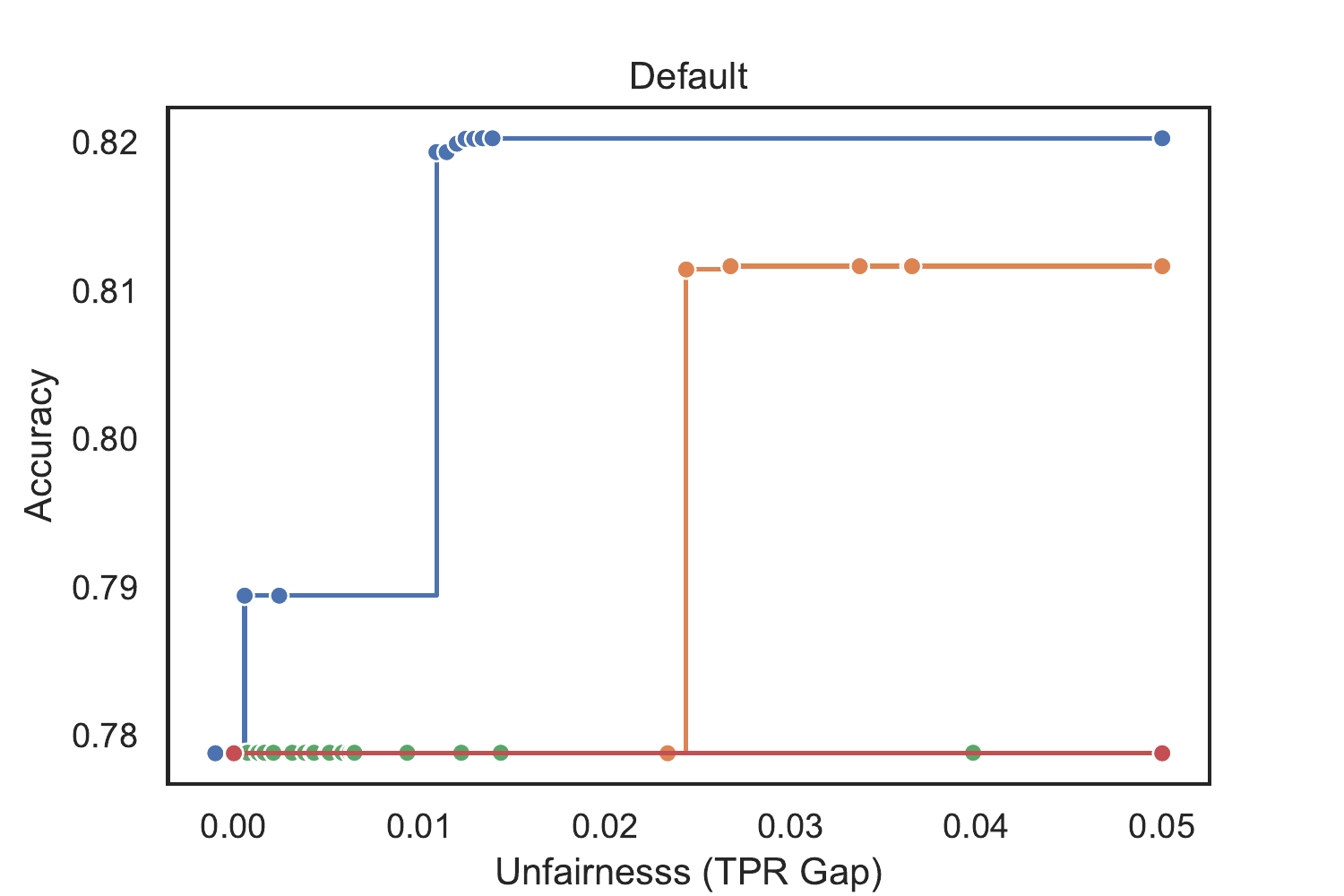}
        \end{subfigure}
        \begin{subfigure}
          \centering
          \includegraphics[width=0.4\linewidth]{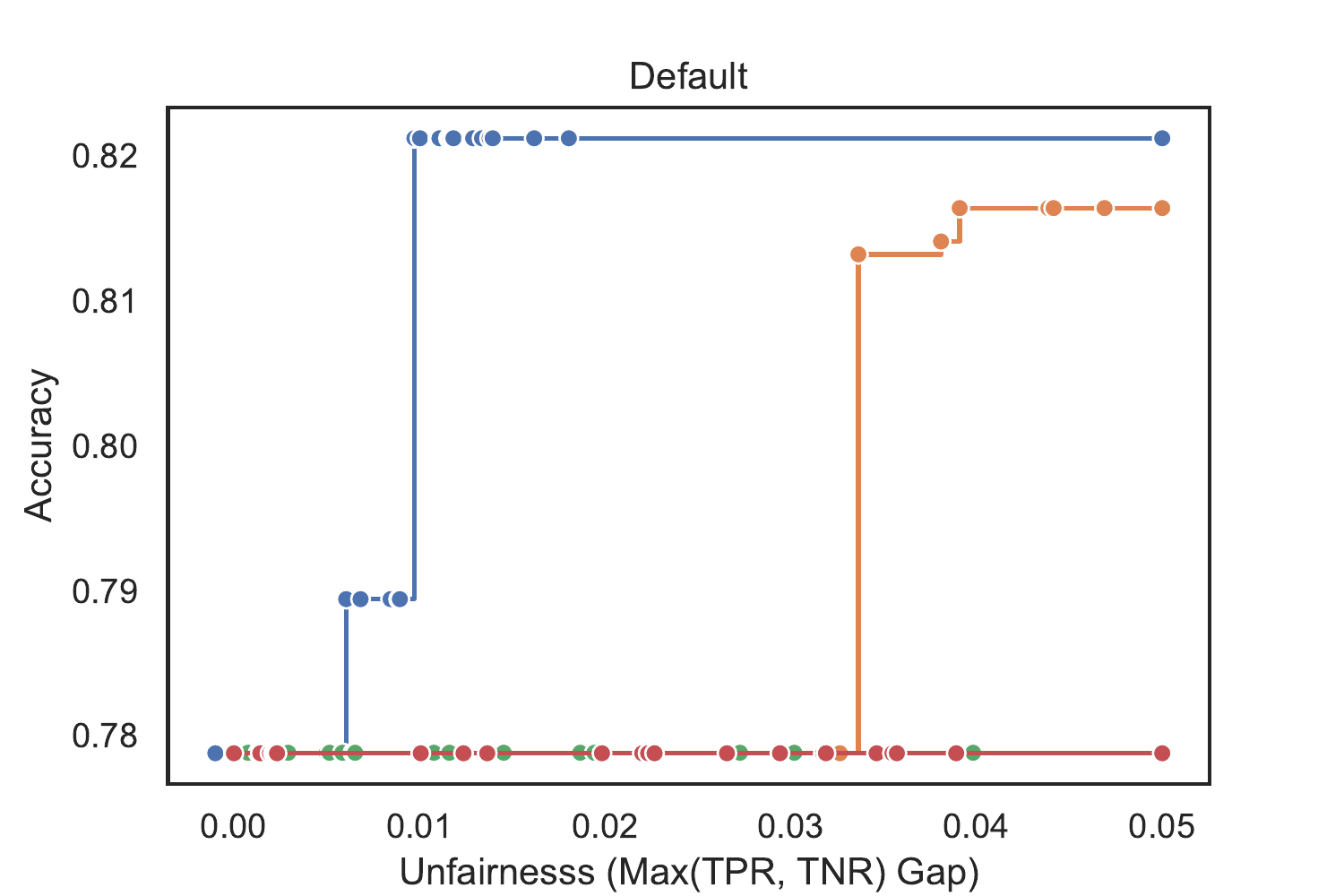}
        \end{subfigure}
        \begin{subfigure}
          \centering
          \includegraphics[width=0.4\linewidth]{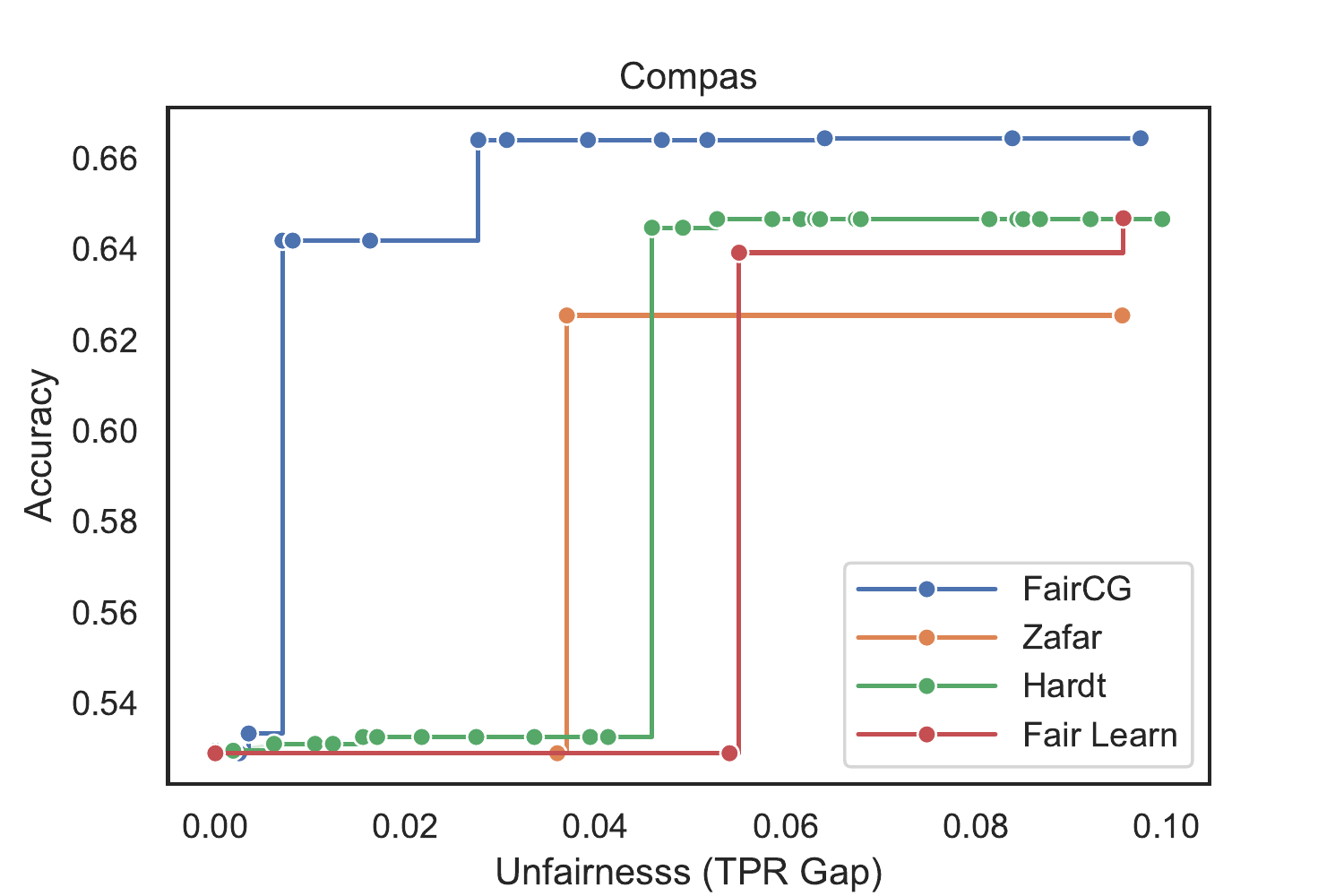}
        \end{subfigure}
    \begin{subfigure}
          \centering
          \includegraphics[width=0.4\linewidth]{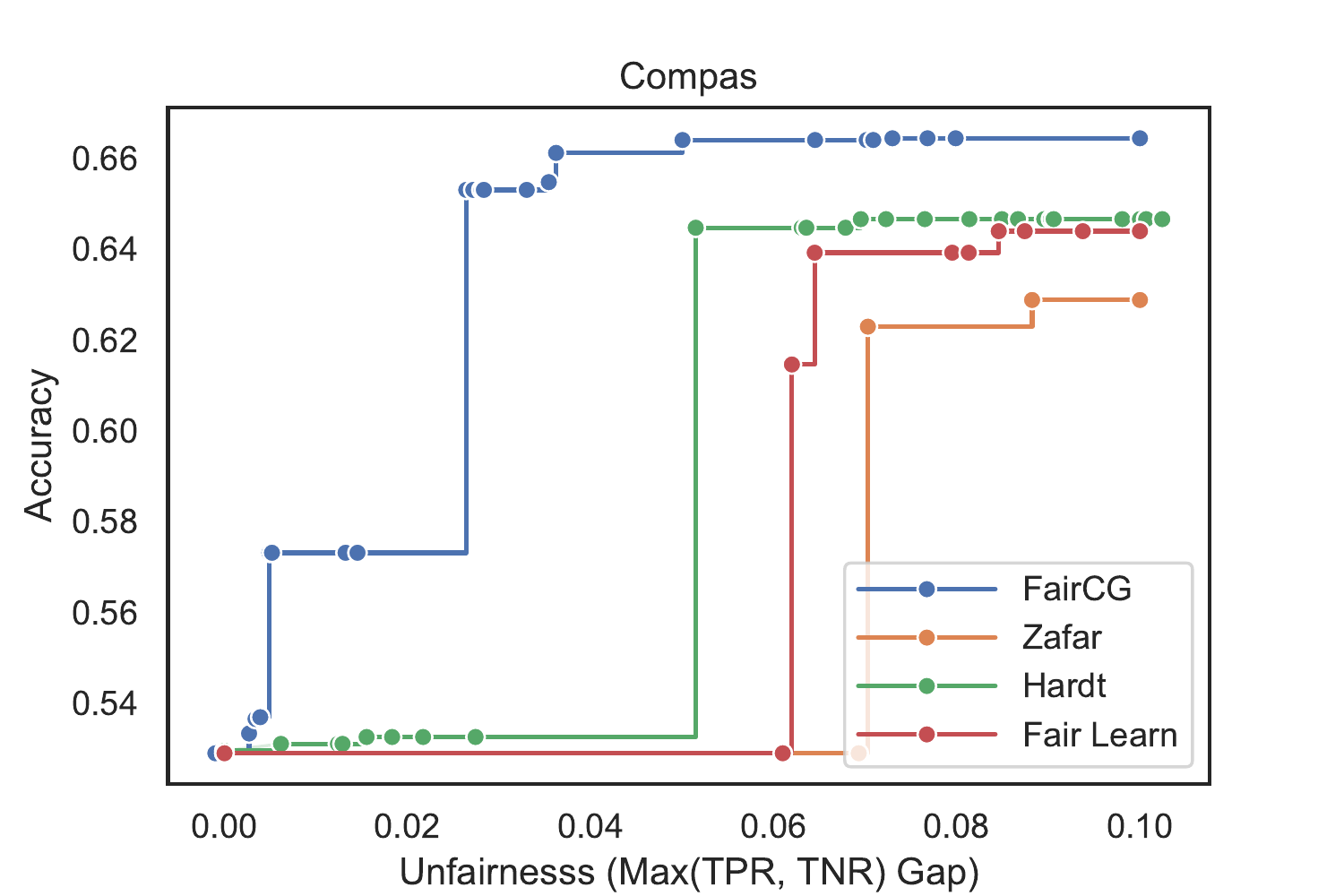}
        \end{subfigure}
         \caption {\label{benchmark_combined} Accuracy Fairness Frontier for Fair CG and other interpretable fair classifiers with respect to \emph{equality of opportunity} (left column) and \emph{equalized odds} (right column).}
    \end{figure*}

    \begin{figure}[t]
    \centering\footnotesize
        \begin{subfigure}
          \centering
          \includegraphics[width=0.6\textwidth]{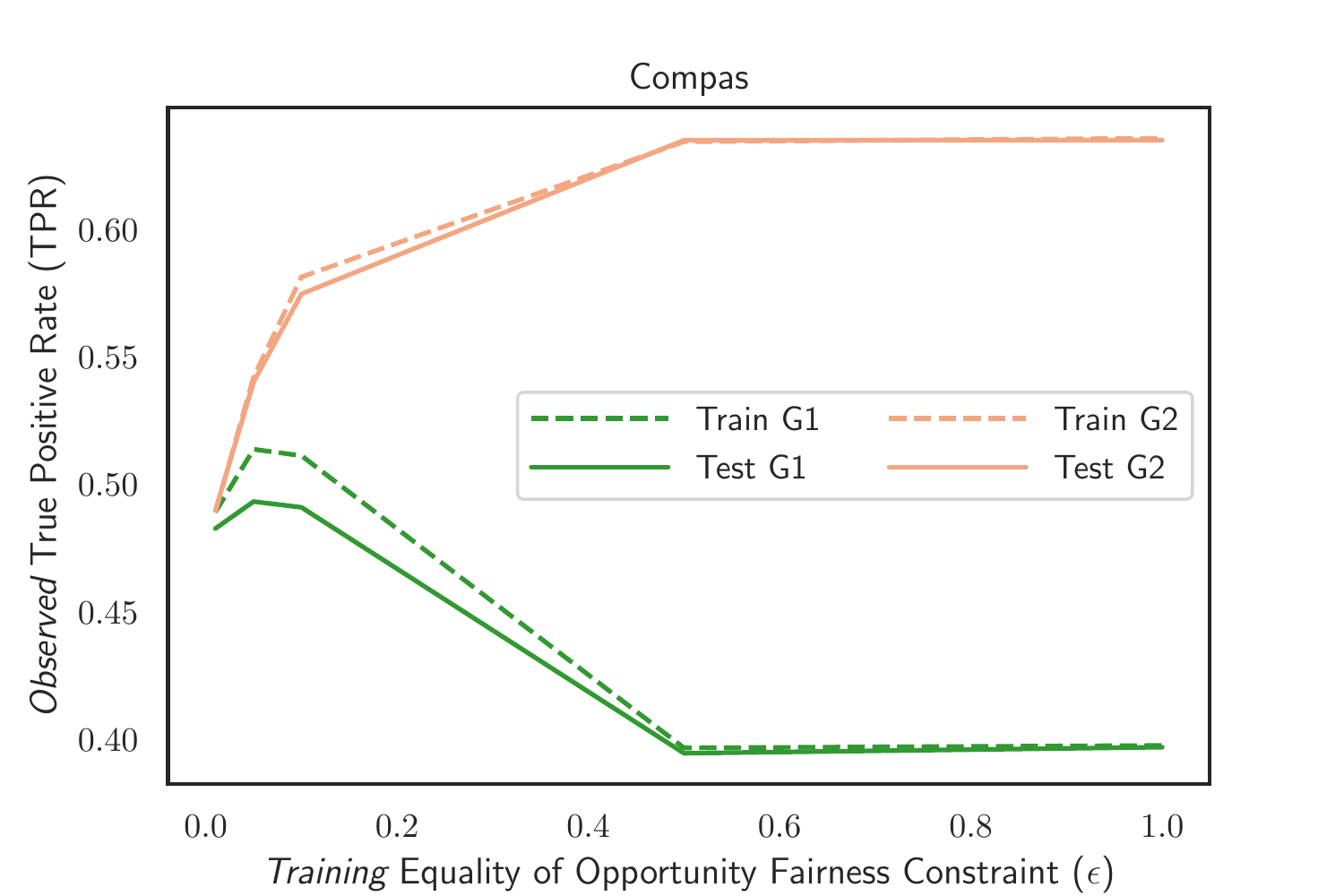}
        \end{subfigure}
         \caption {\label{diverging_tpr} Effect of relaxing the equality of opportunity fairness constraint on both train and test set true positive rate for each group.}\vskip-5mm
    \end{figure}

\begin{table}[htb]
\centering
\caption{\label{complexity} Average complexity of rule sets}
\setlength{\tabcolsep}{5pt} 
\begin{tabular}{c c c c}		\toprule
$\epsilon$ & Adult & Compas & Default \\ \midrule 
0.01 & 59.9 & 11.2 & 11.4 \\ \midrule 
0.1 & 60.1 & 10.9 & 9.3 \\ \midrule 
0.5 & 50.5 & 9.2 & 9.3 \\ \midrule
\bottomrule
\end{tabular}%
\end{table}%
\subsection{Interpretability}
In our framework, the complexity of the rule set (defined as the number of rules plus the number of conditions in each rule) is used as a measure for the interpretability of the rule set. Table \ref{complexity} summarizes the mean complexity of the rule set selected for each dataset and some sample $\epsilon$ under the equality of opportunity constraint. 

To give a sense of the interpretability of FairCG classifiers, we generated sample rule sets to predict criminal recidivism on the compas dataset with and without fairness constraints. We trained each rule set on one train/test split of the data set, and report the rule set as well as its out of sample accuracy and fairness. A sample rule set without any fairness constraints is:

\medskip\noindent\text{Predict repeat offence if:}
\begin{center}
\big[\text{(Score Factor$=$True) and (Misdemeanor=False)  }\big] \\ \text{~OR~} 
\\\big[\text{(Race$\neq$Black) and (Score Factor$=$True) and (Misdemeanor$=$True)} \\ \text{ and (Age$<$45) and (Gender$=$'Male')}\big] \\
 \text{~OR~}
\\\big[\text{(Race$=$Black) and (Score Factor$=$True) and (Priors$\geq$ 10)} \\ \text{ and (Age$<$45) and (Gender$=$'Male')}\big] 
\end{center}

This rule set has a test set accuracy of $67.2\%$, but a $20\%$ gap in the false negative rate, and $22\%$ gap in the false positive rate between the two groups respectively (i.e. $22\%$ unfair with respect to equalized odds). Adding an equalized odds constraint of $0.05$, is:

\medskip\noindent\text{Predict repeat offence if:}
\begin{center}
\big[\text{(Race$\neq$Black) and (Score Factor$=$True) and (Age $<$ 45)}\big] \\ \text{~OR~}
\\\big[\text{(Race$=$Black) and (Score Factor$=$True) and (Misdemeanor$=$True)} \\ \text{ and (Age$<$45) and (Gender$=$'Male')}\big] 
\end{center}

The fair rule set is less accurate, it has a test set accuracy of $65.1\%$, but a $1\%$ gap in the false negative rate and $3.4\%$ gap in the false positive rate between the two groups respectively. Both rule sets are arguably quite interpretable - they have a small number of clauses, each with a small number of conditions. The fair rule set is also interesting in that it highlights the the bias of compas's recidivism tool (score factor indicates whether or not the compas tool predict the offender a high risk of recidivism). In our optimal rule set, the compas tool seems to be a good predictor of recidivism for white offenders, but for Black offenders our rule set adds a number of additional criteria including requiring the offender to be young, male, and have a high number of priors. One lackluster aspect of our rule set, is the fact that it never predicts a repeat offence if the offender if a Black woman. This is a consequence of the relatively low rate of Black woman re-offenders in the data set (3 percent of the overall data). In fact, the error rate for Black women (30 percent) is below the average error rate in the test data (37 percent). However, the false negative error rate is 100 percent and thus could present fairness violations under equalized odds. To account for this, our formulation can easily be extended to include both race and gender as sensitive attributes (i.e. one group for Black men, Black women, white men, and white women respectively) by adding additional fairness constraints to bound the discrepancy between each set of groups. 

\noindent

\section{Conclusion}
In this paper we introduced an IP formulation for building fair Boolean rule sets under both equality of opportunity and equalized odds. Experimental results on classic fair machine learning datasets validated that our algorithm is competitive with the state of the art - dominating popular fair classifiers on 2 of 3 datasets, and remaining unbeaten in regimes of strict fairness. Overall, our algorithm Fair CG provides a powerful tool for practitioners that need simple, fair, and interpretable models for machine learning in socially sensitive settings. 

    \newpage
    \bibliographystyle{plain}
    \bibliography{references.bib}
    
    \newpage
    \appendix

    \section{Datasets and data processing}
In our experiments, we use three common fair machine learning datasets: adult, compas and default. Both adult\footnote{https://archive.ics.uci.edu/ml/datasets/adult} and default\footnote{https://archive.ics.uci.edu/ml/datasets/default} can be found on the UCI machine learning dataset repository \cite{UCI}. Both datasets were unchanged from the data available at the referenced links. For the adult dataset we just use the training data provided, and for default we use the the entire dataset provided. For the compas data from ProPublica \cite{propub} we use the fair machine learning cleaned dataset\footnote{https://www.kaggle.com/danofer/compass}. Following the methodology of \cite{zafar2015fairness} we also restrict the data to only look at African American and Caucasian respondents - filtering all datapoints that belong to other races and creating a new binary column  which indicates whether or not the respondent was African American. A summary of our data and the sensitive attributes we use for group identification is included below.

\begin{table}[h]
\centering
\caption{\label{datasets} Overview of datasets}
\setlength{\tabcolsep}{5pt} 
\begin{tabular}{c c c c}		\toprule
Dataset & Examples & Features & Sensitive Variable \\ \midrule 
Adult & 32,561 & 14 & Gender \\
Compas & 5,278 & 7 & Race \\
Default & 30,000 & 23 & Gender (X2 column) \\ 
\bottomrule
\end{tabular}%
\end{table}%

   To convert the data to be binary-valued, we use a standard methodology also used in \cite{dash2018boolean}. 
    For categorical variables $j$ we use one-hot encoding to binarize each variable into multiple indicator variables that check $X_{j} = x$, and the negation $X_{j} \neq x$.
    For numerical variables we compare values against a sequence of thresholds for that column and include both the comparison and it's negation (i.e. $X_j \leq 1, X_j \leq 2$ and $X_j > 1, X_j > 2$). For our experiments we use the sample deciles as the thresholds for each column. We use the binarized data for all the algorithms we test to control for the binarization method.

\section{Computing infrastructure}
Our experiments were run on a windows desktop computer with an Intel core i7-9000 3 GHz processor and 32 GB of RAM. Our python environment was configured with anaconda and included the following package dependencies:

\begin{table}[h]
\centering
\caption{\label{packages} Overview of package dependencies}
\setlength{\tabcolsep}{5pt} 
\begin{tabular}{c c}		\toprule
Package & Version \\ \midrule 
Python & 3.7.6 \\
Gurobi & 9.0.1 \\
Numpy & 1.18.1 \\
Pandas & 1.0.0 \\
\bottomrule
\end{tabular}%
\end{table}%

\section{Hamming loss}
As part our experiments we tested the impact of optimizing for Hamming Loss instead of 0-1 loss directly. To adapt our IP formulation to optimize for accuracy we have to both change the objective function to include all errors, and add additional constraints to track the number of misclassified data points in the negative class. The full IP formulation is:

\begin{align}
\textbf{min}&\quad&\sum_{i\in \cal{P}} \zeta_i +\sum_{i\in \cal{N}} \zeta_i\\
	\textbf{s.t.}  
	&&\zeta_i + \sum_{k\in {\cal K}_i} w_k&\geq 1 , \quad \zeta_i \geq 0,~i \in \cal{P}~~ \\
	&& \sum_{ k \in {\cal K}_i} w_k  &\leq   \frac{C}{2} \zeta_i , ~ i \in {\cal N}, ~~ \label{const:accZ} \\
	&&\sum_{k\in \mathcal{K}}  c_k w_k &\leq C ~~ \label{const:compZ}\\
	&&\frac{1}{\abs{\mathcal{P}_1}}\sum_{i \in  \mathcal{P}_1 } \zeta_i 
	        - &\frac{1}{\abs{\mathcal{P}_2}}\sum_{i \in  \mathcal{P}_2} \zeta_i \leq \epsilon_1 \label{const:heo1}\\
	&&\frac{1}{\abs{\mathcal{P}_2}}\sum_{i \in  \mathcal{P}_2} \zeta_i 
	        - &\frac{1}{\abs{\mathcal{P}_1}}\sum_{i \in  \mathcal{P}_1} \zeta_i \leq \epsilon_1 \label{const:heo2} \\
	&&w_k  &\in\{0,1\}, \quad {k\in \cal{K}}
\end{align}

where constraint (\ref{const:accZ}) is the new constraint to track misclassifications for the negative class. We upper bound $\sum_{ k \in {\cal K}_i} w_k$ with $C/2$, as the minimum complexity of a rule is 2 and thus the complexity constraint (\ref{const:compZ}) implies the upper bound on the number of rules is $C/2$.  limit on the number of rules is $C/2$. Instead of using disaggregated  constraints (i.e. $w_k \leq \zeta_i$), we use the aggregated form as modern solvers will handle it more efficiently \cite{IPref}. Constraints (\ref{const:heo1}) and (\ref{const:heo2}) bound the equality of opportunity gap. Note that we focus on the equality of opportunity fairness metric, as the equalized odds constraints are specifically tailored for the Hamming Loss model. 

In our experiments comparing the Hamming loss and 0-1 loss formulations, we used a fixed set of  rules generated from our column generation procedure (on an average of 6781, 1007, and 7584 rules per fold for the adult, compas, and default datasets respectively). The fixed rule sets were mined during our experiments on FairCG (i.e. the first stage of the procedure outlined in the computational approach section of the paper). Using these fixed rule sets, we ran both master IP models with the same fairness parameter ($\epsilon$) and the following ranges of complexity bounds:

\begin{itemize}
    \item \textbf{Adult}: $\{10, 20, 30, 40\}$
    \item \textbf{Compas}: $\{10, 15, 20, 30 \}$
    \item \textbf{Default}: $\{10, 15, 20, 30\}$
\end{itemize}

A time limit of 20 minutes was set to solve each IP model, though no instance reached the time limit and all IPs were solved to optimality. We used 10-fold cross validation to select the best complexity bound for each model on each dataset. Figure \ref{hamming} summarizes the relative performance of each model on each dataset with different $\epsilon$ values. We can see that on all datasets the Hamming loss model is able to achieve practically indistinguishable performance in terms of both train and test set accuracy at a fraction of the computation time. Specifically, while the accuracy model appears to perform slightly better with respect to train accuracy, the trend is reversed when it comes to test accuracy. This suggests Hamming loss may lead to rule sets with better generalization. This is especially pronounced on larger datasets, such as Adult, where the accuracy model takes nearly 100x the computation time. Table \ref{Hamming_table} summarizes the same results numerically. Jupyter notebook with the implementation of this experiment can be found in our code base.

\section{Rule mining} \label{sec:ruleMine}
We compare the rules generated from our column generation procedure to rules mined from other tree-based models. For each tree-based model we train the classifier using the split binarized data and use a range of hyperparameters (outlined in Table \ref{ruleMine_hp}) and extract every decision rule present. To extract the rules we simply look at the decision path used for each leaf node which predicts a positive response and translate that to which binarized features would be included in the rule. An implementation of the function used to do the rule extraction is included in our code base. 

\begin{table}[h]
\centering
\caption{\label{ruleMine_hp} Overview of Hyperparameters Used for Rule Mining}
\setlength{\tabcolsep}{5pt} 
\begin{tabular}{c c c}		\toprule
Method & Hyperparameter & Values \\ \midrule 
Decision Tree & Max Depth & \{1,3,5,..30\} \\ \midrule
\multirow{2}[2]{*}{Random Forest} & Max Depth &  \{1,3,5,..30\} \\ 
 & Number of Trees &  \{1,2,3,..10\} \\ \midrule
\multirow{2}[2]{*}{Fair Decision Tree}  & Max Depth & \{1,3,5,..30\}  \\ 
 & Fairness $\epsilon$ & \{0.01,0.03,...0.5\}  \\ 
\bottomrule
\end{tabular}%
\end{table}%

We leverage three different tree-based models: Decision Trees, Random Forests, and Fair Decision Trees from fairlearn \cite{agarwal2018reductions}. For the decision tree and random forest models we use the scikit-learn implementations \cite{scikit-learn}. For decision trees we vary the depth of tree allowed, and for random forests we adjust both the tree depth and the number of trees included in the forest. For the fair learn models we use the exponentiated gradient approach and use decision trees as the base learner. We vary both the max depth of the trees used and how strict the fairness criteria is (i.e. $\epsilon$). The average size of the rule set generated for each dataset and fold is outlined in Table \ref{ruleset_sze}.

\begin{table}[h]
\centering
\caption{\label{ruleset_sze} Average Rule set Size}
\setlength{\tabcolsep}{5pt} 
\begin{tabular}{c c c c}		\toprule
Method & Adult & Compas & Default \\ \midrule 
Decision Tree & 2276 & 317 & 2558 \\ \midrule 
Random Forest & 15083 & 3309 & 24680 \\ \midrule 
Fair DT & 4321 & 1034 & 4958 \\ \midrule
FairCG & 6781 & 1071 & 7584 \\ 
\bottomrule
\end{tabular}%
\end{table}%

To test the efficacy of each rule set we run the master IP model with each rule set, and do 10-fold cross validation to select the best rule set complexity for each model. We repeat the tests for three different fixed fairness criteria ($\epsilon$). 
Tables \ref{tab_ruleset_adult}, \ref{tab_ruleset_compas} and \ref{tab_ruleset_default} summarize the results numerically. It is important to emphasize that the performance reported for each rule set is not the performance of that model on the datasets (ex. the accuracy of running a decision tree model), but rather the performance of the mined rule set from that model when optimized within our IP framework (ex. the accuracy of the optimal rule set using rules mined from a decision tree). Across both metrics and all three datasets we see that the rule set generated by column generation performs better than those from other heuristic tree sources - outperforming them with respect to Hamming loss, train set accuracy and test set accuracy.

\section{Fairness-accuracy trade-offs}

For our experiments we took a two-phase approach. During the first rule generation phase, we ran our column generation algorithm with a set of different hyperparameters to generate a set of potential rules. To warm start this procedure, we also start the column generation process with a set of rules mined from a random forest classifier (as discussed in section \ref{sec:ruleMine}). We then solve the master IP with the set of candidate rules and a larger set of hyperparameters to generate the curves included in the body of the report. Tables \ref{eps_hp} and \ref{c_hp} summarize the hyperparameters used for both Phase I and II. Note that for the equalized odds formulation, we set $\epsilon_1 = \epsilon_2$ and use the values in Table \ref{eps_hp}.

\begin{table}[h]
\centering
\caption{\label{eps_hp} Overview of $\epsilon$ Hyperparameters Tested}
\setlength{\tabcolsep}{5pt} 
\begin{tabular}{c c c c c}		\toprule
Dataset & Phase 1 & Phase 2  \\ \midrule 
Adult &\{0.01, 0.1, 1\}  & \{0, 0.01, 0.05, 0.1, 0.15, 0.2, 1\} \\
Compas &\{0.01, 0.1, 1\}  & \{0, 0.01, 0.05, 0.1, 0.15, 0.2, 1\} \\
Default &\{0.01, 0.1, 1\}  & \{0, 0.01, 0.03, 0.05, 0.1, 0.2, 1\} \\
\bottomrule
\end{tabular}%
\end{table}%

\begin{table}[h]
\centering
\caption{\label{c_hp} Overview of $C$ Hyperparameters Tested}
\setlength{\tabcolsep}{5pt} 
\begin{tabular}{c c c c c}		\toprule
Dataset & Phase 1 & Phase 2  \\ \midrule 
Adult &\{5, 20, 40\}  & \{5, 15, 20, 30, 40\} \\
Compas &\{5, 15, 30\}  & \{5, 10, 15, 20, 30\} \\
Default &\{5, 15, 30\}  & \{5, 10, 15, 20, 30\} \\
\bottomrule
\end{tabular}%
\end{table}%

We tested our algorithm against three other popular interpretable fair classifiers: Zafar 2017 \cite{Zafar_2017}, Hardt 2016 \cite{hardt2016equality}, and Fair Decision trees trained using the exponentiated gradient algorithm in fairlearn \cite{agarwal2018reductions}. For the Zafar algorithm we used the optimizer parameters specified in Table \ref{zafar_hp}, and tested a range of 30 different $\epsilon$ values (linearly spaced between 0 and 0.5) for the covariance threshold. For Hardt 2016 we used the logistic regression implementation from scikit-learn \cite{scikit-learn} and tested 100 different decision thresholds for each sub-group (1\% increments). Finally, for the exponentated gradient algorthm from fairlearn we used scikit-learn's decision tree as the base estimator and tested both a range of maximum depth hyperparameters (20 values linearly spaced between 1 and 30), and 30 different $\epsilon$ values (linearly spaced between 0 and 0.5) for the fairness constraints. For all the algorithms we used 10-fold cross-validation to select the best hyperparameters for every level of fairness and removed dominated points from the figures present in the results section of the main paper.

\begin{table}[h]
\centering
\caption{\label{zafar_hp} Overview of Zafar Optimizer Hyperparameters}
\setlength{\tabcolsep}{5pt} 
\begin{tabular}{c c c}		\toprule
Dataset & $\tau$ & $\mu$  \\ \midrule 
Adult & 5 & 1.2\\
Compas &20  & 1.2\\
Default & 0.5 & 1.2 \\
\bottomrule
\end{tabular}%
\end{table}%

\begin{figure}[!htb]
    \centering
    \begin{subfigure}
      \centering
      \includegraphics[width=0.95\textwidth]{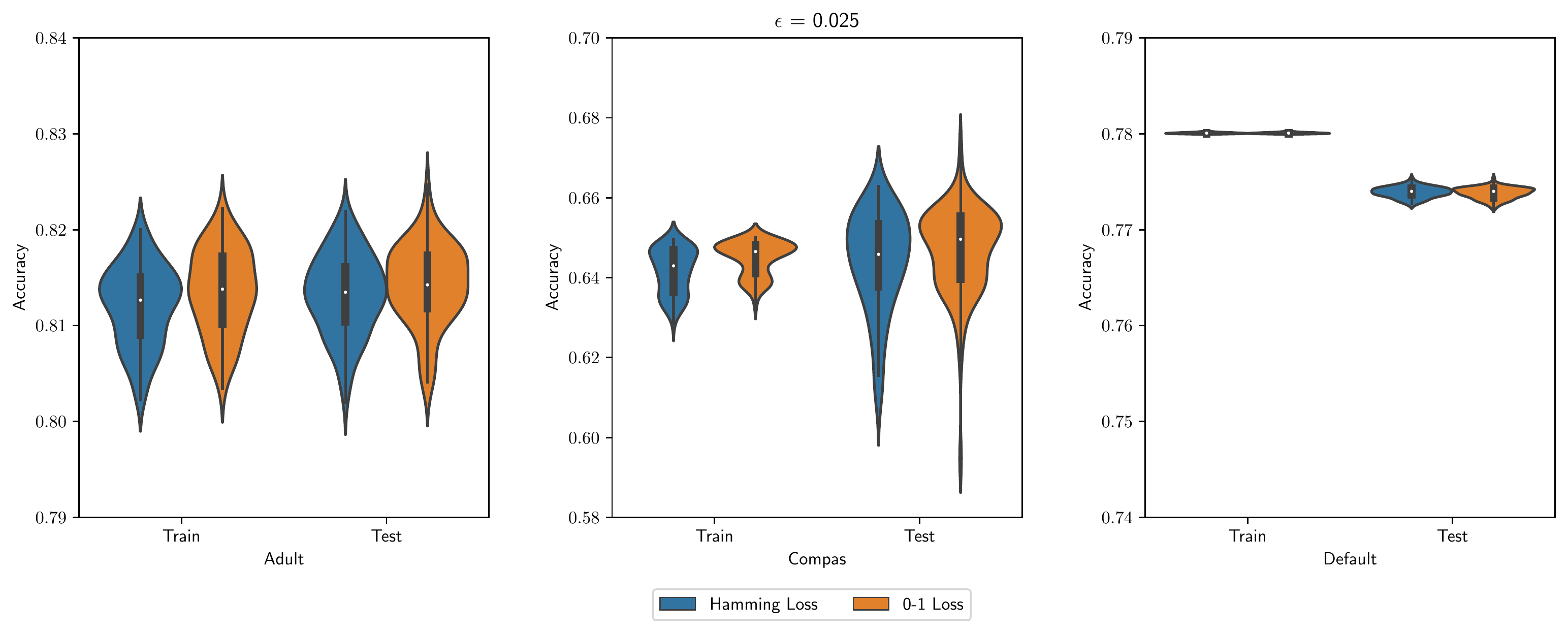}
    \end{subfigure}
        \begin{subfigure}
      \centering
      \includegraphics[width=0.95\textwidth]{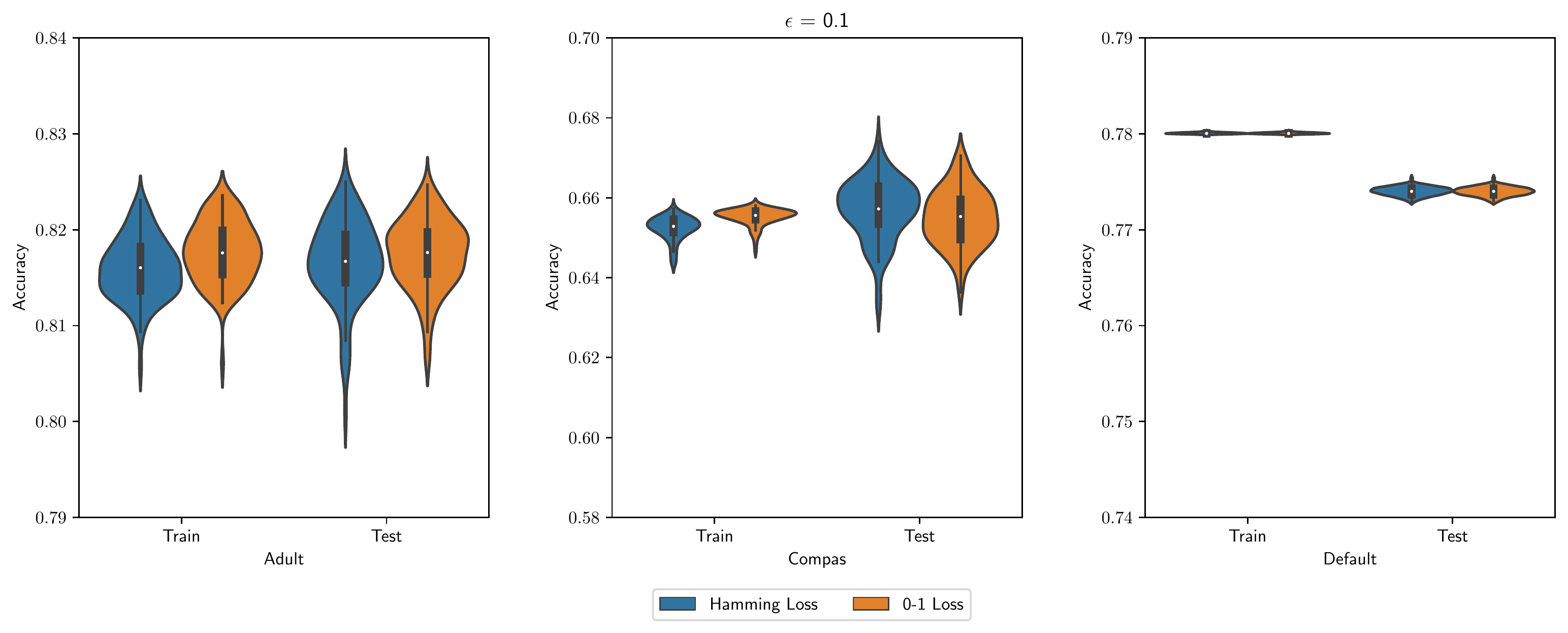}
    \end{subfigure}
    \begin{subfigure}
      \centering
      \includegraphics[width=0.95\textwidth]{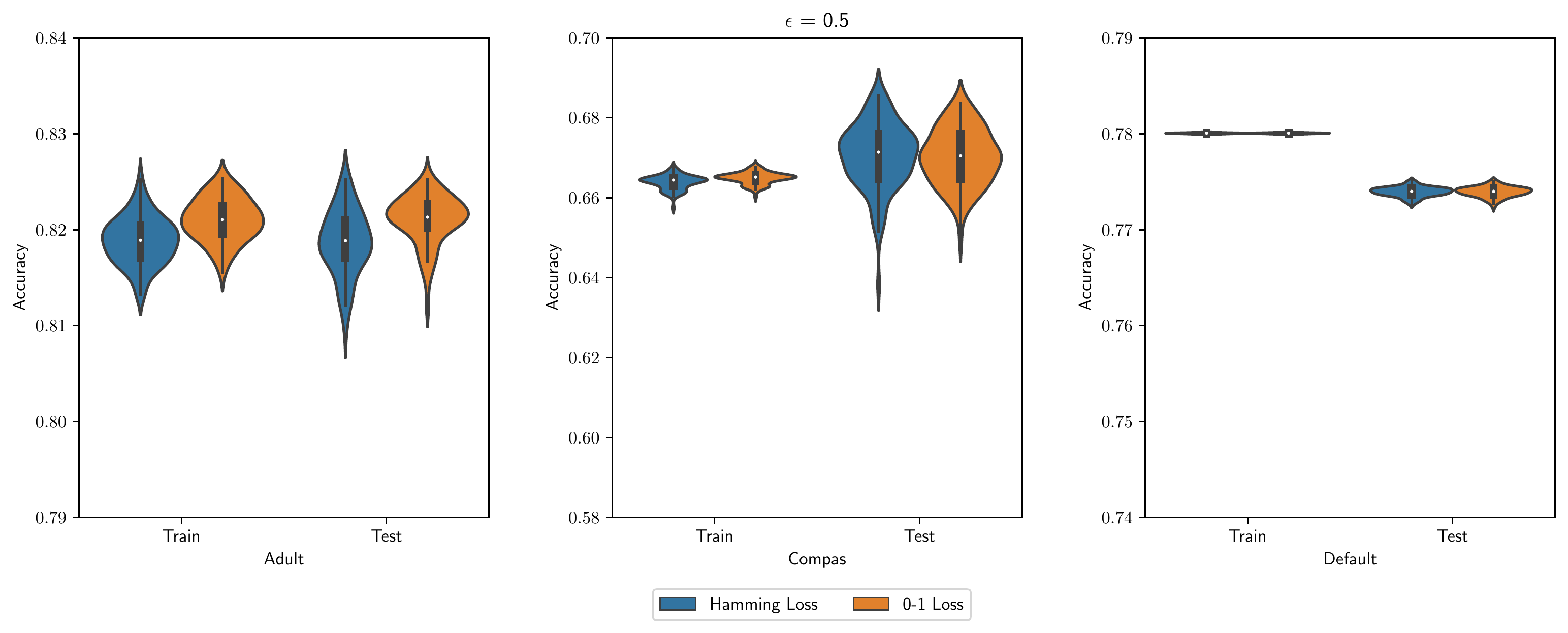}
    \end{subfigure}

  \caption {\label{hamming} Relative performance of models trained to maximize accuracy or minimize Hamming loss respectively under equality of opportunity constraint ($\epsilon = 0.025$). The violin plots show the distribution of train and test set accuracy for models under each objective for 100 random splits of the datasets.}
\end{figure}

\renewcommand{\baselinestretch}{1.2}

\begin{table*}[h]
\centering
\caption{\label{Hamming_table} Summary of performance difference when optimizing for Hamming loss vs. accuracy (standard deviation in parenthesis)}
\small
\setlength{\tabcolsep}{5pt} 
\begin{tabular}{c c c c c c} \toprule
Dataset & $\epsilon$ & Objective & Train Acc. & Test Acc. & Comp. Time (s) \\ \midrule
\multirow{6}{*}{Adult} & \multirow{2}[2]{*}{0.025} & Accuracy & 81.24 (1.1)& 80.8 (1.3) & 573 (73) \\
& & Hamming & 81.21 (0.2) & 81.4 (0.5) & 31 (5) \\
& \multirow{2}{*}{0.1} & Accuracy & 81.8 (0.8)& 81.2 (1.1) & 565 (63)  \\
& & Hamming & 81.7 (0.2) & 81.6 (0.7) & 30 (6) \\
& \multirow{2}{*}{0.5} & Accuracy &81.8 (0.7) & 81.3 (0.1) & 568 (70) \\
& & Hamming & 81.8 (0.2) & 81.7 (0.7) & 30 (4)  \\ \midrule
\multirow{6}{*}{Compas} & \multirow{2}[2]{*}{0.025} & Accuracy & 65.4 (0.3) & 63.6 (3.4) & 24 (3)  \\ 
& & Hamming & 65.2 (0.3) & 64.7 (2.2)& 6 (3)  \\
& \multirow{2}{*}{0.1} & Accuracy & 66.3 (0.2) & 65.1 (2.7) & 21 (4)  \\
& & Hamming & 66.1 (0.2) & 65.5 (2.6) & 4 (2)  \\
& \multirow{2}{*}{0.5} & Accuracy & 67.1 (0.3) & 65.9 (2.9) & 6 (1) \\
& & Hamming & 66.9 (2.5) & 66.0 (2.3) & 2 (0.3)  \\ \midrule 
\multirow{6}{*}{Default} & \multirow{2}[2]{*}{0.025} & Accuracy &77.9 (0) & 77.9 (0)& 511 (105) \\
& & Hamming &77.9 (0) &77.9 (0) & 3 (0.2)  \\
& \multirow{2}{*}{0.1} & Accuracy &77.9 (0)  &77.9 (0)  & 475 (117)  \\
& & Hamming &77.9 (0)  &77.9 (0)  & 4 (0.6)  \\
& \multirow{2}{*}{0.5} & Accuracy & 77.9 (0) &77.9 (0)   & 519 (119)\\
& & Hamming &77.9 (0)  &77.9 (0)  & 3 (0.3)  \\
\bottomrule
\end{tabular}%
\end{table*}%

\begin{table*}[h]
\centering
\caption{\label{tab_ruleset_adult} Summary of rule set performance on adult dataset (standard deviation in parenthesis)}
{\small
\setlength{\tabcolsep}{5pt} 
\begin{tabular}{c c l l l l l l } \toprule
 &  & \multicolumn{3}{c}{Equality of Opportunity} & \multicolumn{3}{c}{Equalized Odds} \\
 $\epsilon$ & Rule Set & Train Acc. & Test Acc. & Hamming loss & Train Acc. & Test Acc. & Hamming loss \\ \midrule
\multirow{6}[2]{*}{0.025} & Decision Tree & 77.1 (1.5)& 77.0 (1.5) & 6703 (434) & 76.9 (1.4) & 76.7 (1.4) & 6781 (418)\\
& Fair Decision Tree & 76.5 (0.6) & 76.4 (0.4) & 6883 (188) &76.5 (1.3) & 76.5 (1.4) & 6900 (369)\\
& Random Forest & 76.5 (0.8) & 76.5 (1.7) & 6885 (243) & 76.1 (0.2) & 76.1 (1.0) & 7000 (64)\\
& All but CG & 77.1 (1.5)& 77.0 (1.5) & 6703 (434) & 77.0 (1.3) & 76.8 (1.4) & 6781 (418) \\
& Fair CG & 80.7 (1.2) & 80.5 (1.1) & 5693 (358) &81.0 (1.6) & 81.0 (1.4) & 5567 (438) \\ 
& All & 80.7 (1.2) & 80.5 (1.1) & 5693 (358) & 81.1 (1.4) & 81.0 (1.4) & 5566 (438) \\ \cmidrule{1-8}
\multirow{6}[2]{*}{0.1} & Decision Tree & 77.6 (1.4)& 77.3 (1.3) & 6558 (402) & 77.4 (1.4) & 77.2 (1.3) & 6610 (414)  \\
& Fair Decision Tree & 77.9 (1.4) & 77.8 (1.6) & 6458 (412) & 78.7 (0.2) & 78.7 (0.8) & 6244 (68)\\
& Random Forest & 76.8 (0.8) & 76.5 (1.0) & 6780 (385) & 76.7 (0.8) & 76.5 (1.0) & 6839 (232) \\
& All but CG & 77.9 (1.4) & 77.8 (1.6) & 6458 (412) & 78.7 (0.2) & 78.7 (0.8) & 6244 (68) \\
& Fair CG & 81.8 (0.2) & 81.6 (0.7) & 5340 (67) & 81.3 (1.3) & 81.4 (1.1) & 5450 (291) \\
& All & 81.9 (0.4) & 81.3 (0.5) & 5338 (80) & 81.6 (1.0) & 81.3 (1.1) & 5417 (259)\\ \cmidrule{1-8}
\multirow{6}[2]{*}{0.5} & Decision Tree & 77.6 (1.4)& 77.3 (1.3) & 6561 (399) & 77.4 (1.4) & 77.2 (1.3) & 6614 (409) \\
& Fair Decision Tree & 79.8 (1.1) & 79.7 (1.5) & 5926 (326) & 78.7 (1.8) & 78.6 (1.1) & 6253 (518) \\
& Random Forest & 77.0 (1.0) & 76.9 (1.5) & 6743 (305) & 76.9 (1.3) & 76.7 (1.4) & 6769 (372)\\
& All but CG & 79.8 (1.1) & 79.7 (1.5) & 5926 (326) & 78.7 (0.2) & 78.7 (0.8) & 6244 (68) \\
& Fair CG & 81.8 (0.2) & 81.6 (0.7) & 5339 (67) & 81.4 (1.1) & 81.4 (1.3) & 5422 (260)\\
& All &  81.9 (0.3) & 81.3 (0.9) & 5335 (62) & 81.6 (1.0) & 81.3 (1.1) & 5417 (259)\\ 
\bottomrule
\end{tabular}%
}
\end{table*}%

\begin{table*}[h]
\centering
\caption{\label{tab_ruleset_compas} Summary of rule set performance on compas dataset (standard deviation in parenthesis)}
{\small
\setlength{\tabcolsep}{5pt} 
\begin{tabular}{c c l l l l l l } \toprule
 &  & \multicolumn{3}{c}{Equality of Opportunity} & \multicolumn{3}{c}{Equalized Odds} \\
$\epsilon$ & Rule Set & Train Acc. & Test Acc. & Hamming loss & Train Acc. & Test Acc. & Hamming loss \\ \midrule
\multirow{6}[2]{*}{0.025} & Decision Tree & 57.5 (3.7)& 57.2 (4.4) & 2020 (175) & 60.3 (2.8) & 60.2 (4.1) & 1884 (131) \\
& Fair Decision Tree & 60.2 (0.5) & 59.4 (0.5) & 1891 (223) & 62.1 (1.7) & 62.5 (3.0) & 1802 (82)\\
& Random Forest & 62.8 (2.2) & 62.4 (3.2) & 1782 (116) & 64.0 (1.1) & 63.7 (2.9) & 1710 (51) \\
& All but CG & 62.8 (2.1) & 62.6 (2.9) & 1750 (115) & 64.0 (1.1) & 63.7 (2.9) & 1710 (51) \\
& Fair CG & 64.5 (0.7) & 64.4 (2.0) & 1690 (34) & 65.0 (0.3) & 64.6 (2.3) & 1660 (14)\\
& All & 64.6 (0.9) & 64.3 (2.2) & 1679 (44) & 65.3 (0.2) & 64.7 (2.5) & 1650 (12) 
\\ \cmidrule{1-8}
\multirow{6}[2]{*}{0.1} & Decision Tree & 59.8 (4.3)& 59.9 (4.6) & 1911 (202) & 62.5 (1.7) & 62.3 (3.4) & 1782 (83) \\
& Fair Decision Tree & 64.2 (1.2) & 63.5 (3.9) & 1701 (58) & 62.6 (2.1) & 62.6 (2.5) & 1778 (100)\\
& Random Forest & 64.7 (1.3) & 64.5 (2.7) & 1680 (63) & 65.5 (0.2) & 65.5 (2.6) & 1643 (15)\\
& All but CG & 64.9 (1.2) & 64.6 (2.9) & 1649 (80) &65.5 (0.2) & 65.5 (2.6) & 1643 (15)\\
& Fair CG & 65.5 (0.6) & 65.2 (2.4) & 1638 (27) & 65.9 (0.4) & 65.2 (2.7) & 1626 (20)\\
& All &  65.6 (0.6) & 652 (2.1) & 1630 (34) & 66.1 (0.2) & 65.3 (2.6) & 1612 (11)\\ \cmidrule{1-8}
\multirow{6}[2]{*}{0.5} & Decision Tree & 62.6 (4.6)& 62.5 (4.9) & 1777 (220) & 66.6 (0.3) & 66.0 (2.5) & 1586 (15) \\
& Fair Decision Tree & 65.9 (0.4) & 65.8 (3.3) & 1620 (20) & 64.9 (1.6) & 64.1 (2.8) & 1666 (76)\\ 
& Random Forest & 66.0 (0.7) & 65.9 (2.9) & 1615 (31) & 66.5 (0.3) & 65.8 (3.1) & 1590 (15) \\
& All but CG & 66.1 (0.5) & 66.0 (3.9) & 1601 (42) & 66.5 (0.3) & 65.8 (3.1) & 1590 (15) \\
& Fair CG & 66.4 (0.5) & 66.0 (2.4) & 1596 (21) & 66.8 (0.4) & 66.0 (2.5) & 1578 (22) \\
& All &  66.6 (0.6) & 66.3 (2.7) & 1586 (28) & 66.9 (0.3) & 66.1 (2.3) & 1571 (14) \\ 
\bottomrule

\end{tabular}%
}
\end{table*}%

\begin{table*}[h]
\centering
\caption{\label{tab_ruleset_default} Summary of rule set performance on default dataset (standard deviation in parenthesis)}
\setlength{\tabcolsep}{5pt} 
{\small
\begin{tabular}{c c l l l l l l } \toprule
 &  & \multicolumn{3}{c}{Equality of Opportunity} & \multicolumn{3}{c}{Equalized Odds} \\
$\epsilon$ & Rule Set & Train Acc. & Test Acc. & Hamming loss & Train Acc. & Test Acc. & Hamming loss \\ \midrule
 \multirow{6}[2]{*}{0.025} & Decision Tree & 77.9 (0.6)& 78.1 (0.5) & 5963 (17) & 77.9 (0.1) & 78.2 (0.6) & 5965 (18) \\
& Fair Decision Tree & 77.9 (0.6)& 78.1 (0.5) & 5963 (17) & 77.9 (0.1) & 78.2 (0.6) & 5965 (18) \\
& Random Forest & 77.9 (0.6)& 78.1 (0.5) & 5963 (17) & 77.9 (0.1) & 78.2 (0.6) & 5965 (18) \\
& All but CG &77.9 (0.6)& 78.1 (0.5) & 5963 (17) & 77.9 (0.6)& 78.1 (0.5) & 5963 (17) \\
& Fair CG & 81.4 (0.2) & 80.9 (1.3) & 5601 (30) & 81.4 (0.2) & 80.9 (1.3) & 5601 (30) \\
& All & 82.6 (0.3) & 82.0 (0.7) & 5501 (32) & 82.6 (0.3) & 82.0 (0.7) & 5501 (32) \\ \cmidrule{1-8}
\multirow{6}[2]{*}{0.1} & Decision Tree & 77.9 (0.6)& 78.1 (0.5) & 5963 (17) & 77.9 (0.1) & 78.2 (0.6) & 5965 (18) \\
& Fair Decision Tree & 77.9 (0.6)& 78.1 (0.5) & 5963 (17) & 77.9 (0.1) & 78.2 (0.6) & 5965 (18) \\
& Random Forest & 77.9 (0.6)& 78.1 (0.5) & 5963 (17) & 77.9 (0.1) & 78.2 (0.6) & 5965 (18) \\
& All but CG & 77.9 (0.6)& 78.1 (0.5) & 5963 (17) & 77.9 (0.6)& 78.1 (0.5) & 5963 (17) \\
& Fair CG & 81.4 (0.2) & 80.9 (1.3) & 5601 (30) & 81.4 (0.2) & 80.9 (1.3) & 5601 (30) \\
& All & 82.6 (0.3) & 82.0 (0.7) & 5501 (32) & 82.6 (0.3) & 82.0 (0.7) & 5501 (32)\\ \cmidrule{1-8}
\multirow{6}[2]{*}{0.5} & Decision Tree & 77.9 (0.6)& 78.1 (0.5) & 5963 (17) & 77.9 (0.1) & 78.2 (0.6) & 5965 (18) \\
& Fair Decision Tree & 77.9 (0.6)& 78.1 (0.5) & 5963 (17) & 77.9 (0.1) & 78.2 (0.6) & 5965 (18) \\
& Random Forest & 77.9 (0.6)& 78.1 (0.5) & 5963 (17) & 77.9 (0.1) & 78.2 (0.6) & 5965 (18) \\
& All but CG & 77.9 (0.6)& 78.1 (0.5) & 5963 (17) & 77.9 (0.6)& 78.1 (0.5) & 5963 (17) \\
& Fair CG & 81.4 (0.2) & 80.9 (1.3) & 5601 (30) & 81.4 (0.2) & 80.9 (1.3) & 5601 (30) \\
& All &  82.6 (0.3) & 82.0 (0.7) & 5501 (32) & 82.6 (0.3) & 82.0 (0.7) & 5501 (32) \\
\bottomrule

\end{tabular}%
}
\end{table*}%

	\end{document}